\crefname{hypothesis}{Hypothesis}{Hypotheses}
\title{Adversarial Transferability in Deep Denoising Models: Theoretical Insights and Robustness Enhancement via Out-of-Distribution Typical Set Sampling\thanks{Submitted to the editors DATE.
}}
\author{Jie Ning\footnotemark[3]
\and Jiebao Sun\footnotemark[3]
\and Shengzhu Shi\footnotemark[3] \thanks{Corresponding author 
  (\email{mathssz@hit.edu.cn})}
\and Zhichang Guo\footnotemark[3]
\and Yao Li\thanks{School of Mathematics, Harbin Institute of Technology, Harbin, China}
\and Hongwei Li\thanks{School of Mathematical Sciences, Capital Normal University, Beijing, China}
\and Boying Wu\footnotemark[3]
}
\begin{document}

\maketitle

\begin{abstract}
Deep learning-based image denoising models demonstrate remarkable performance, but their lack of robustness analysis remains a significant concern. A major issue is that these models are susceptible to adversarial attacks, where small, carefully crafted perturbations to input data can cause them to fail. Surprisingly, perturbations specifically crafted for one model can easily transfer across various models, including CNNs, Transformers, unfolding models, and plug-and-play models, leading to failures in those models as well. Such high adversarial transferability is not observed in classification models. We analyze the possible underlying reasons behind the high adversarial transferability through a series of hypotheses and validation experiments. By characterizing the manifolds of Gaussian noise and adversarial perturbations using the concept of typical set and the asymptotic equipartition property, we prove that adversarial samples deviate slightly from the typical set of the original input distribution, causing the models to fail. Based on these insights, we propose a novel adversarial defense method: the Out-of-Distribution Typical Set Sampling Training strategy (TS). TS not only significantly enhances the model's robustness but also marginally improves denoising performance compared to the original model.

\end{abstract}

\begin{keywords}
image denoising, adversarial attack, transferability, typical set, robustness
\end{keywords}

\begin{MSCcodes}
94A08, 68U10, 68Q32 
\end{MSCcodes}

\section{Introduction}
Image denoising is a classical topic in image processing. Mathematically, it can be described as an inverse problem in the following form:
\begin{equation}
	\label{eq:1}
	f(x)=u(x)+n(x),
\end{equation}
where $x\in\Omega$, $f$ is the observed noisy image, $u$ is the clean image, and $n$ is the additive noise. The goal of image denoising is to estimate the clean image while preserving as much detail as possible. Traditional methods include variational-based methods, partial differential equation (PDE)-based methods, non-local methods, and more. However, these traditional approaches primarily rely on handcrafted priors and often struggle to find a prior that generalizes well across different images. The era of deep learning brought a revolution to image denoising, not only leading the way in today’s noise suppression capabilities but also expanding the scope of denoising problems that can be addressed \cite{elad2023image}. Despite this, deep learning-based methods often lack theoretical guarantees for their denoising solutions. Recently, plug-and-play and deep unfolding models have been proposed, combining traditional variational and PDE frameworks with deep learning structures to offer theoretical guarantees for denoising solutions under certain assumptions. However, how the deep learning components impact the underlying mathematical framework remains unclear.

The absence of a solid theoretical foundation raises questions about the robustness of deep learning-based denoising models. For example, the denoised image may exhibit artifacts in regions with uniform color, the denoising performance may decline in real noise removal tasks and blind denoising tasks, and so on. Understanding how and why certain images or types of noise can degrade a well-trained denoising model is critical for researchers to trust and improve deep learning-based methods. Adversarial attacks are a research area focused on identifying methods to cause well-trained deep neural networks to fail. Since they can directly pinpoint the failure cases of deep learning models, they serve as a good starting point for investigating how and why deep learning models fail. Adversarial attacks explore the vulnerabilities of neural networks, aiming to understand their limits and improve their robustness. In 2014, Szegedy et al. \cite{IP} discovered that adding small, carefully chosen perturbations to an input image could cause a deep learning system to make incorrect judgments, potentially leading to serious consequences. The method of generating subtle and undetectable adversarial perturbations is known as an adversarial attack. 

Most existing adversarial attacks \cite{nguyen2015deep, biggio2013evasion} focus on classification tasks, but it is equally destructive in the field of image denoising. Moreover, image denoising, being a regression task with known variable distributions, provides a more suitable context for studying the properties of adversarial attacks. In 2023, Ning et al. \cite{ning2023evaluating} demonstrated that deep image denoising methods exhibit significant vulnerability to adversarial attacks. Introducing imperceptibly small perturbations during the denoising process will lead to large changes in the resulting images. Furthermore, Ning et al. observed that different deep image denoising models exhibit similar adversarial performance. Adversarial perturbations crafted for one model can easily transfer to the other models, including CNNs, unfolding models, plug-and-plays. In contrast, the high adversarial transferability is uncommon in the field of image classification. Image classification models appear to exhibit a greater degree of independence, resulting in adversarial samples that are challenging to transfer effectively across different models. Furthermore, perturbations in image denoising exhibit similar patterns across various samples. These observations indicate that various deep image denoising models behave very much similar at least in the neighborhoods of all the samples. However, the reason was undiscovered in the previous work.

In this paper, we first explore the reason behind the high adversarial transferability in deep image denoising models through a series of hypothesis and validation procedures. Experiments suggest that deep image denoising models effectively learn the characteristics of Gaussian noise. These models exhibit similarities because they all learn the same underlying distribution, specifically the Gaussian distribution. The similarity among models ultimately results in high adversarial transferability. By introducing the concept of the typical set for image denoising, we prove that the Gaussian noises created in the training process are bounded in a small typical set. Adversarial perturbations deviate from the typical set thus easily fail models. However, the deviated noises are still bounded in another relative larger typical set. Based on the above theory and experiments, we propose an adversarial defence method called the Out-of-Distribution Typical Set Sampling Training strategy (TS). TS operates on the premise that adversarial samples are located in low-density regions of the training distribution. Enhancing sampling in these low-density regions can improve model robustness. TS maintains or even improves the denoising performance of the original models compared to traditional adversarial training methods, which improve robustness at the expense of performance.

TS is not only an adversarial defense technique but also a data preprocessing method, and it can increase both the robustness and denoising ability of the model. Specifically, the high adversarial transferability is actually caused by the curse of dimensionality. Since images are high-dimensional data, the noise in such dimensions is limited to a small set by the asymptotic equipartition property or the weak law of large numbers. Thus, even a tiny perturbation can lead to significant out-of-distribution issues. TS expands the training samples to include a diverse array of noise samples, leading to more reliable generalization, even for out-of-distribution samples. The main contributions of our research are summarized as follows:

\begin{itemize}
	\item[$\bullet$] Through experiments, we observe that various deep denoising models, including CNNs, Transformers, unfolding models, and plug-and-plays, exhibit high adversarial transferability among each other. Even different images share the same adversarial pattern. The experimental observation implies that various deep denoising models behave similar in the neighborhoods of all the samples.
    \item[$\bullet$] We explore the reason behind the unexpected adversarial transferability and model similarity from the perspective of the typical set. The high dimensionality limits the training samples of Gaussian noise to a small set, making tiny perturbations sufficient to create out-of-distribution samples that cause the models to fail. For both the original Gaussian noise and perturbed Gaussian noise, we establish theoretical probability bounds and analyze their set volumes compared to the input space.
	\item[$\bullet$] Utilizing the concept of the typical set, we propose an adversarial defense method named TS, which broadens the noise sampling space of training data. TS effectively enhances model robustness and marginally improves denoising performance compared to the original model. Consequently, TS can also be considered as a generalized enhancement technique for high-dimensional denoising.
\end{itemize}

The rest of the paper is organized as follows: we review the most related works in section 2; then the experimental investigation into the causes of adversarial transferability, accompanied by a detailed analysis and demonstration of the properties of the adversarial space in high-dimensional contexts are given in section 3; after that, the algorithm and thoughts for the TS are analyzed in section 4; section 5 presents the experimental results; and finally, we conclude this paper in section 6.

\section{Related Works}

\subsection{Traditional Image Denoising Methods}

Traditional image denoising methods typically formulate this problem as a variational form, consisting of a fidelity term $\mathcal{F}(u;f)$ and a regularization term $\mathcal{R}(u)$ as prior:
\begin{equation}
	\label{eq:2}
	u^*=\underset{u}{\arg\min}\;\mathcal{F}(u;f)+\lambda\mathcal{R}(u),
\end{equation}
where the coefficient $\lambda$ is used to balance the two terms. Among the variational methods, total variation (TV) \cite{TV} regularization is one of the most classical. It assumes the image gradient follows a Laplacian probability distribution, such that solutions belong to the bounded variational space. TV regularization can effectively preserve image edges, however the prior has difficulty differentiating between noise and texture, leading to the “staircase effect”. Higher-order derivatives \cite{rajwade2012image} are later introduced as regularization terms to better recognize texture and details. Unfortunately, higher-order derivatives can cause edge blurry \cite{lysaker2003noise}. On the other hand, variational methods can be solved through gradient flow, leading to the development of many denoising methods directly from the perspective of partial differential equations (PDEs). The most classical PDE-denoiser Perona-Malik (PM) \cite{PM} model introduces a local structures-guided anisotropic diffusion process, allowing edges to be preserved under a slower diffusion rate. It helps to maintain important features in the image while reducing noise. For further developments in variational and PDE-based denoising methods, one can refer to the comprehensive review \cite{salamat2021recent}. Variational and PDE-based methods typically operate under the assumption that the pixels in the solution are primarily influenced by their local neighborhood in the input image. From a non-local perspective, the Block Matching and 3D Collaborative Filtering (BM3D) method \cite{BM3D} is proposed to exploit the self-similarity within the image by searching for similar patches across the entire image. BM3D effectively retains image structures and details, achieving a high signal-to-noise ratio. However, traditional methods primarily rely on manual priors and have difficulty to find the most suitable prior that generalize across different images.

\subsection{Pure Deep Learning-based Image Denoising Methods}

Unlike manually designed priors applied in traditional model-based methods, learning-based methods learn implicit priors from large datasets by optimizing loss functions. Therefore, learning-based image denoising methods achieve significant results that are difficult for traditional methods to match. A milestone in designing deep-learning architectures for image denoising is DnCNN \cite{DNCNN}, which implicitly recovers clean images by predicting noise through a residual learning module. Given that discriminative learning methods need to learn a specific model for each noise level, Zhang et al. proposed the fast and flexible denoising convolutional neural network (FFDNet) \cite{zhang2018ffdnet}. FFDNet includes an adjustable noise level map as an input, thus the model parameters can remain unchanged across different noise levels. FFDNet can be viewed as a collection of multiple denoisers suited to various noise levels. CNN models have limited receptive field and insufficient adaptability to input content. Transformers that have another kind of neural network structure just fill these defects and show significant performance improvement. Liang et al. \cite{liang2021swinir} introduced a model SwinIR based on the Swin Transformer architecture, designed specifically for image restoration tasks. SwinIR includes multiple Swin-Transformer layers and residual connections, which enhance the model's capacity for feature representation. Tian et al. \cite{tian2024cross} embedded the Transformer in serial and parallel blocks to obtain a cross-transformer denoising CNN to extract complementary salient features for improving the adaptability of the denoiser to complex scenes.  

\subsection{Model-data Hybrid-driven Image Denoising Methods}

Considering the uninterpretability of learning-based methods, plug-and-play and unfolding models have been proposed. These models combine traditional variational and PDE frameworks with deep learning structures to provide theoretical guarantees for the denoising solutions under certain assumptions. DPIR \cite{zhang2021plug} is a typical plug-and-play image denoising method. It treats pre-trained CNN denoisers as a module and inserts them into the half-quadratic splitting iterative algorithm to provide prior information. DPIR significantly enhances the effectiveness of model-driven methods due to the powerful implicit prior modeling through deep learning. Without task-specific training, DPIR is more flexible than data-driven methods while offering comparable performance. On the other hand, the main idea of unfolding methods is to unfold the model-based method in the form of neural network structures. Deep unfolding neural networks can achieve high quality image denoising by learning the noise distribution and features of the image. Ren et al. \cite{ren2021adaptive} proposed an innovative adaptive consistency prior (ACP) that integrates a nonlinear filtering operator, a reliability matrix, and a high-dimensional eigen-transform function into the traditional consistency prior framework. By incorporating the ACP term into the maximum a posteriori framework, they developed a new end-to-end trainable and interpretable deep denoising network DeamNet. This integration allows DeamNet to leverage the adaptive consistency prior for more effective noise reduction. Mou et al. \cite{mou2022deep} integrated a gradient estimation strategy into the gradient descent step of the proximal gradient descent algorithm, and proposed DGUNet capable of dealing with complex and real-world image degradation. DGUNet designs inter-stage information pathways across proximal mappings in different projected gradient descent iterations, correcting most deep unfolding networks  through multi-scale and spatial adaption of the inherent information loss.

\subsection{Adversarial Attack}
From the perspective of adversarial attacks, deep neural networks are notably susceptible to adversarial samples. In 2014, Szegedy et al. \cite{IP} first discovered that by adding a carefully chosen tiny perturbation to an input image, the system could be led to make incorrect judgments, potentially causing serious consequences. As shown in Figure \ref{fig:fgsm}, the photo of a panda is classified as standard poodle by the model after adding adversarial noise that is difficult to detect by the human eye.
The algorithms of generating small and imperceptible adversarial perturbations are known as adversarial attacks. Adversarial attacks can be formulated as a optimization problem, where the goal is to maximize the loss function with respect to the perturbations while ensuring the perturbations remain within a specified bound, i.e.,
\begin{equation}\label{eq:adv_attack}
	\max_{\delta\in\mathcal{S}}\mathcal{L}(f(x+\delta;\theta),y)
\end{equation}
where $x$ is the original input and $\delta$ is the perturbation added to the input. $\mathcal{S}$ is the set of allowed perturbations, often bounded by an $l_p$-norm constraint. $\theta$ represents the parameters of the model $f$. $\mathcal{L}$ is a loss function, such as MSE loss for image denoising. $y$ is the true label of the input $x$. The optimization problem seeks to find the perturbation $\delta$ that maximizes the loss $\mathcal{L}$, thereby causing the model $f$ to fail on the perturbed input $x+\delta$. The perturbation is constrained to be within the set $\mathcal{S}$ to ensure it remains imperceptible.

\begin{figure}[tb]
	\centering
	\includegraphics[height=4cm]{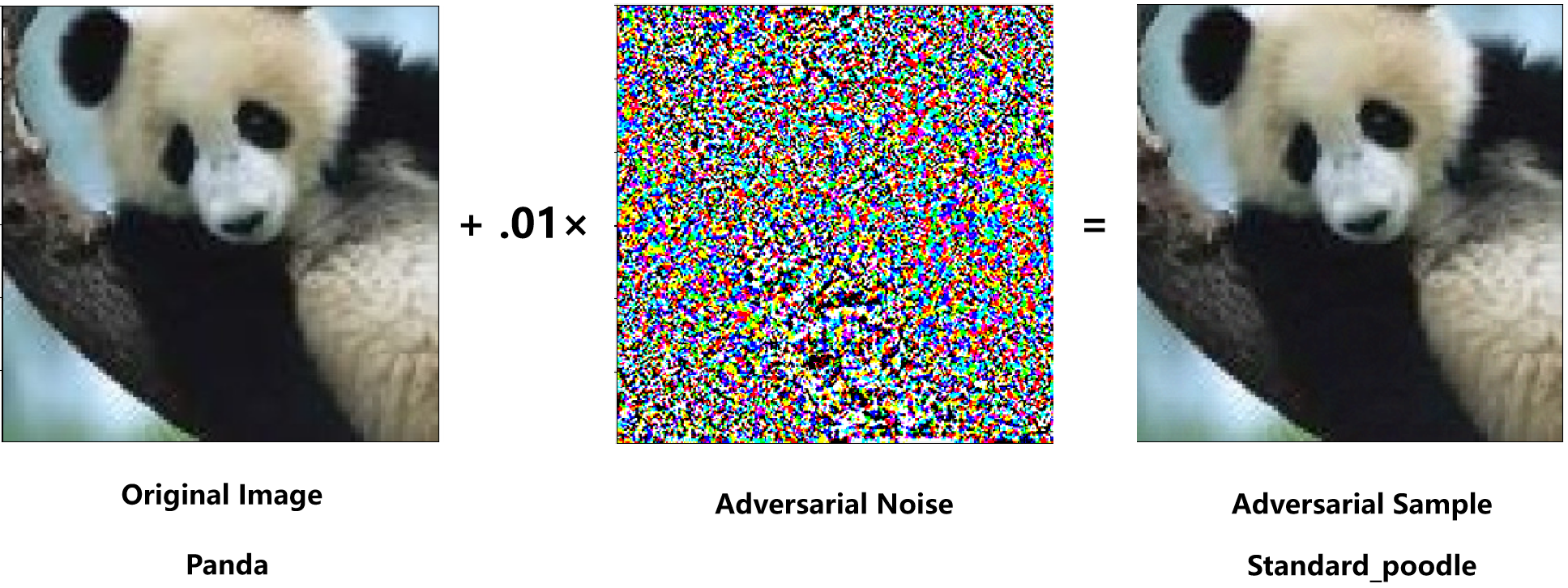}
	\caption{A demonstration of the adversarial attack: by adding a carefully designed, imperceptibly small perturbation to the image input, Inception v3 misclassified the panda as a standard poodle.}
	\label{fig:fgsm}
\end{figure}

From the perspective of manifold learning \cite{chapelle2009semi}, high-dimensional data essentially concentrates near nonlinear low-dimensional manifolds. Adversarial attacks can be viewed as malicious processes that drag benign examples away from these concentrated manifolds. In 2015, Goodfellow et al. \cite{EH} proposed an efficient one-step attack method based on the gradient of a deep neural network, known as the Fast Gradient Sign Method (FGSM). FGSM employs noise mapping and the relationships between image pixels as a loss function. The gradient of the loss function is calculated by adding or subtracting a fixed value based on the sign of each pixel's gradient. Under FGSM, almost all the samples can fool the model with high confidence, despite the changes being imperceptible to the human eye. In 2017, Kurakin \cite{kurakin2016adversarial} showed the Basic Iterative Method (BIM), which extended FGSM into an iterative algorithm by replacing the one-step process with multiple smaller steps. Another iterative variant is the Projected Gradient Descent (PGD) method \cite{mkadry2017towards}, which applies projected gradient descent on a negative loss function to generate adversarial samples and can be described as a saddle point optimization problem. Unlike BIM, PGD introduces random perturbations to the original image before starting the iteration. 

Empirically, adversarial samples can be found near almost any input for most well-trained neural networks. Theoretically, Mahloujifar et al. \cite{mahloujifar2019curse} argue that samples tend to cluster tightly in high-dimensional spaces due to the ``concentration of measure", making it easier for adversarial attacks to succeed. Shafahi et al. \cite{shafahi2018adversarial} propose that, due to the vastness of high-dimensional input spaces, small visual perturbations can significantly deviate samples in certain norms, thus affecting a model's predictions and making adversarial examples unavoidable. Although adversarial samples are common and not difficult to generate, identifying which perturbations are adversarial and developing strategies to avoid them remain challenging. Additionally, the complexity of data distribution and the lack of explicit expressions for the decision boundaries of neural networks further complicate the problem.

\subsection{Defense of Adversarial Attack}

Although the reason behind the existence of adversarial attacks remains an open question, several practical methods for defending against them have been proposed. Defense methods aim to make neural networks robust to adversarial attacks or general small perturbations, with significant practical implications for security.  Existing adversarial defense methods are primarily classified into two categories: robust defenses and detection defenses. Robust defenses aim to improve the adversarial robustness of models by modifying the network structure, training process or preprocessing input data. These modifications enable the model to correctly respond to adversarial samples. In contrast, detection defenses involve setting up an additional detector to accurately identify adversarial samples in the input data, thereby preventing them from being fed into the target model.

Existing literature suggests that detecting and completely eliminating adversarial perturbations is very challenging, as localized residues can still pose significant threats. Additionally, preventing adversarial attacks often requires specific information about the attack techniques, and effectively thwarting all forms of attacks remains challenging. Therefore, adversarial training has become one of the most effective and widely used techniques to improve model robustness. The adversarial training can be represented as a robust optimization problem, i.e.
\begin{equation}
    \theta^* = \arg\min_\theta \mathbb{E}_{(x,y)\sim \mathcal{D}}
	\max_{\delta\in\mathcal{S}}\mathcal{L}(f(\mathbf{x}+\delta;\theta),y)
\end{equation}
where $\mathcal{D}$ denotes the distribution of $(x,y)$. The optimization problem estimates the model by minimizing the empirical risk over the most-adversarial samples. The widespread use of adversarial training has inspired extensive research on improvements and new features. Ford et al. \cite{gilmer2019adversarial} and Yoon et al. \cite{yoon2021adversarial} explored the existence and properties of adversarial spaces from geometric and statistical theoretical perspectives, respectively. Xie et al. \cite{xie2020adversarial} introduced an augmented adversarial training scheme, called AdvProp, which treats adversarial examples as additional examples and uses a separate auxiliary batch specification.

\subsection{Projective Gradient Descent (PGD)}
PGD \cite{mkadry2017towards} is a powerful adversarial attack method that is widely used in the field of image classification, noted for its strong attack performance and flexibility. Mathematically, the PGD attack is a numerical approach of solving the non-convex optimization problem with constrains. For optimization problem in equation (\ref{eq:adv_attack}), PGD iterating the following equation until a stopping condition is met:
\begin{align}
x^{(t+1)} = \mathcal{P}_\mathcal{S}(x^{(t)}-\alpha\cdot\text{sign(}\nabla_x \mathcal{L}(f(x^{(t)};\theta),y))) \notag
\end{align}
where $\mathcal{S} = \{x:\|x-x_0\|_\infty\leq \epsilon\}$ ensures the perturbation is not perceivable to human eyes, $x_0$ is the original image to be attacked, $\alpha\in(0,\infty)$ is the gradient step size, and $\mathcal{P}_\mathcal{Q}(\cdot):\mathbb{R}^n \rightarrow \mathbb{R}^n$ is a projection from $\mathbb{R}^n$ to $\mathcal{Q}$. PGD is a simple algorithm that if the point $x^{(t)}$ after the gradient update is leaving the set $\mathcal{S}$, then project it back; otherwise keep the point. The set $\mathcal{S}$ is constrained by $L_\infty$ norm, thus the projection
\begin{align}
\mathcal{P}_\mathcal{Q} = \text{Clip}_{x_0,\epsilon} (x) \notag
\end{align}
is equavilent to clip every dimensions of $x$ within $[-\epsilon,\epsilon]$ neighborhood of $x_0$. The sign function ensures that the perturbation affects each dimension of the input equally in terms of magnitude, regardless of the scale of the gradient in that dimension. This perturbation approach can be more effective in high-dimensional spaces, like images. The PGD attack method also induces a random initialization that adds a uniformly distributed random variable $\mathcal{U}[-\epsilon,\epsilon]$ to $x_0$ before the first iteration. Random initialization helps to prevent the formation of local minima on the training samples during model training. The authors of PGD verified in \cite{mkadry2017towards} that if a network is trained to be robust to PGD adversarial samples, it is also robust to a wide range of other attacks.

\subsection{Deep Image Denoising Adversarial Attacks}
Deep image denoising models are typically regarded as uninterpretable black-box systems, leading to limited discussion on their security, robustness, and noise generalization capabilities. Ning et al. \cite{ning2023evaluating} discussed the robustness of these models from an adversarial perspective. They introduced a PGD-based adversarial attack method, Denoising-PGD, which successfully degraded the performance of various denoising models by adding small perturbations to the input images. Denoising-PGD introduced extra artifacts and textures to the output of CNN, unfolding, and plug-and-play models while maintaining the distribution of the input noise almost unchanged. As illustrated in the last subplot of Figure \ref{f:advshow}, the denoising result of the perturbed image contains textures in flat areas, which significantly degrade the denoising performance of DnCNN.

\begin{figure}[!t]
	\centering
	\includegraphics[height=6cm]{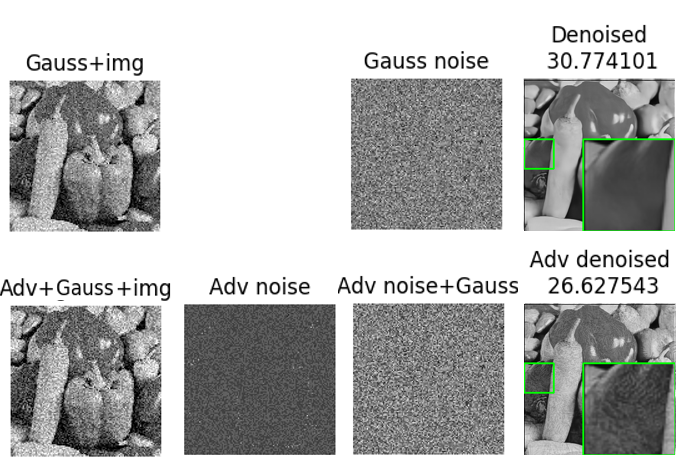}
	\caption{The performance of DnCNN decreases under the Denoising-PGD adversarial attack. First row: DnCNN successfully denoised the input image. Second row: DnCNN suffers a significant performance decrease when applied to the adversarial sample.}
	\label{f:advshow}
\end{figure}

Furthermore, Ning et al. observed that all the adversarial samples specifically generated for the DnCNN model can also cause failures in other models. This phenomenon, where adversarial samples can transfer among various models, is known as adversarial transferability, which can occur with some adversarial samples. For well trained deep image denoising models $f_1$ and $f_2$, an adversarial sample designed to attack $f_1$ is said to possess adversarial transferability if it induces a similar adversarial effect on $f_2$. The mathematical expression for the adversarial transferability is:
\begin{align}
	\left\{\begin{array}{c|c}
		& \mathcal{L}\left(f_{1}(x), y\right)<\mathcal{L}(x,y) \\
		x^\prime = adv(x,f_1) & \mathcal{L}\left(f_{2}(x), y\right)<\mathcal{L}(x,y) \\
		&\mathcal{L}\left(f_{1}(x^{\prime}), y\right)-\mathcal{L}\left(f_{1}\left(x\right), y\right)>M \\
        &\mathcal{L}\left(f_{2}(x^{\prime}), y\right)-\mathcal{L}\left(f_{2}\left(x\right), y\right)>M
	\end{array}\right\}
\end{align}
where the adversarial sample ${x}'=adv(x,f_1)$ is generated specifically for the model $f_1$ by the adversarial attack $adv(\cdot )$ on the noisy image $x$. We say that the adversarial sample is transferable when $x^\prime$ can successfully attack both $f_1$ and $f_2$, where $\mathcal{L}$ is a loss and $M$ is a threshold indicating the success of the attack.

Surprisingly, Denoising-PGD demonstrated strong adversarial transferability, i.e., all the adversarial samples could transfer to all the tested models. As shown in Figure \ref{f:alladvper}, the adversarial sample was designed for DnCNN but could also attack other models, including FFDNet\cite{zhang2018ffdnet}, DnCNN-B\cite{DNCNN}, DPIR\cite{zhang2021plug}, ECNDNet\cite{tian2024cross}, RDDCNN-B\cite{zhang2023robust}, and DeamNet\cite{ren2021adaptive}. This phenomenon was observed across all the tested input images, suggesting that these models share a nearly identical set of adversarial samples. Moreover, the tested models include non-blind denoising models, blind denoising models, plug-and-play models, and unfolding denoising models. Therefore, the observation that the models, regardless of their structure or whether they are model-data hybrid-driven, share a nearly identical set of adversarial samples is astonishing.

\begin{figure}[!t]
	\centering
	\includegraphics[height=10cm]{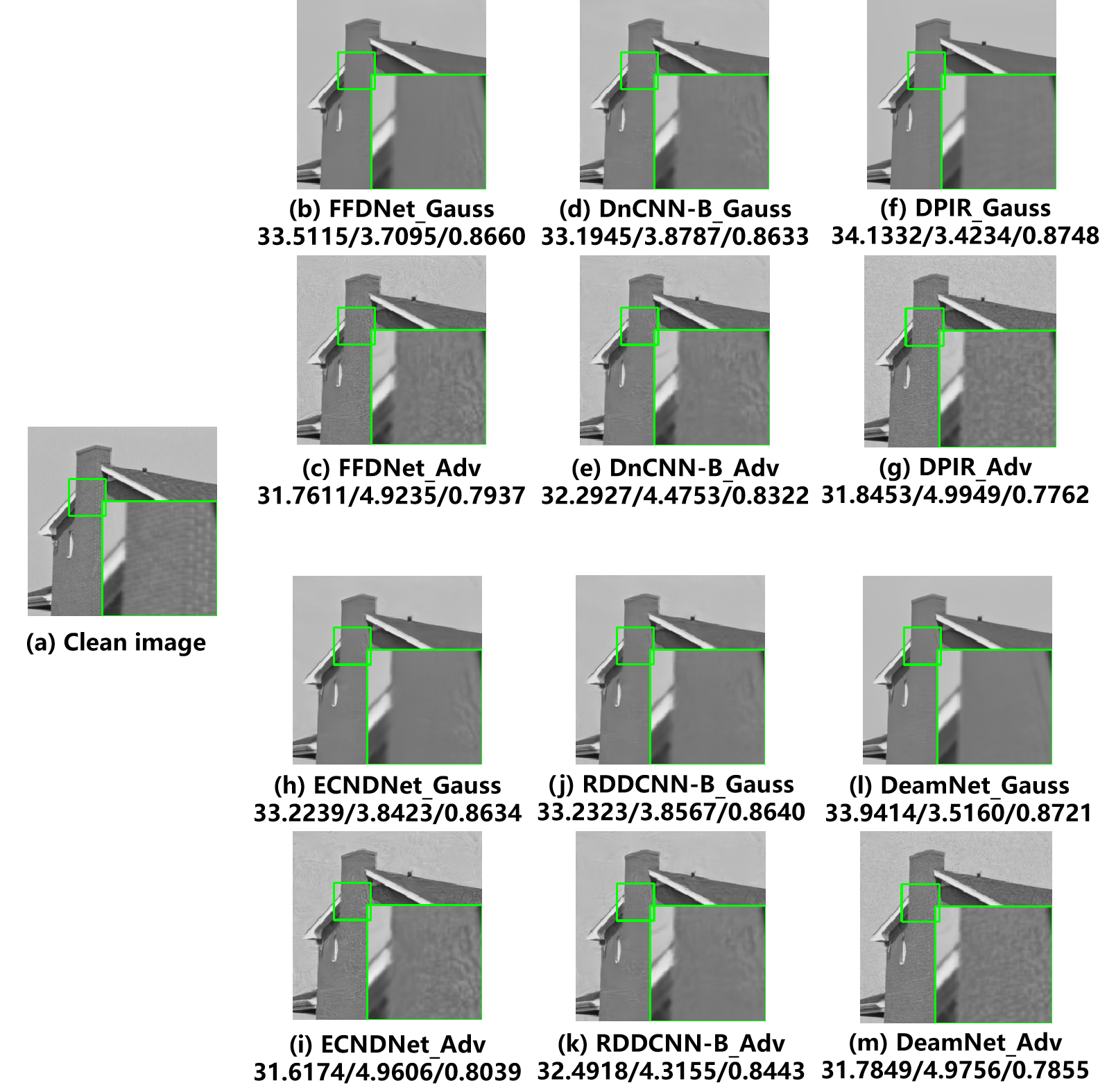}
	\caption{Model performance on the adversarial sample generated for DnCNN. Subplots (b), (d), (f), (h), (j) , and (l) are model outputs for original noisy images. Subplots (c), (e), (g), (i), (k), and (m) are model outputs for adversarial samples. The metrics are PSNR, MAE, and SSIM.}
	\label{f:alladvper}
\end{figure}

The unexpectedly strong transferability may be attributed to either the adversarial attack method or the characteristics of the model itself, as these are the only elements involved in the adversarial attack. In more detail, it could be due to two potential reasons:
\begin{itemize}
\item \textit{Hypothesis 1.1}: (Attack is strong.) The adversarial attack method itself has strong transferability, allowing it to find adversarial samples that are transferable across all the models.
\item \textit{Hypothesis 1.2}: (Models are similar.) The models behave similar in the neighborhoods of all the samples, making any adversarial sample transferable to other models.
\end{itemize}
However, the commonly accepted fact is that the PGD attack has poor transferability. The key idea of the PGD attack method is to focus on the local gradient of the sample, which is why transferability is not its strength. Its poor transferability in image classification tasks is consistent with its algorithm characteristic. It implies that the strong transferability between deep image denoising models is not brought about by adversarial attack methods, but is the property of denoising models. In other words, the strong adversarial transferability suggests that deep image denoising models exhibit similar behaviors in the neighborhoods of all input samples.  From the adversarial perspective, it implies that the denoising models are very much similar. However, the design and training motivations behind these deep models are quite disparate. The observed similarity in their adversarial behavior is therefore counterintuitive, prompting further investigation to uncover the underlying reasons. In this paper, we begin with a series of hypotheses and experiments to provide explanatory evidence for the transferability of adversarial samples.

\section{Investigation and Explanation}

In this section, we first investigate the reasons behind the high adversarial transferability in image denoising through a series of hypotheses and experiments. Guided by the experimental observations, we then propose a theoretical explanation based on the concept of the typical set. Then, a bound is derived for all adversarial samples. Building on proposed theory, we introduce an adversarial defense method called Out-of-Distribution Typical Set Sampling (TS). TS not only enhances the model's robustness but also improves performance, making it suitable as a data augmentation technique. Importantly, TS is a sampling strategy that does not alter the model's structural design, allowing it to be easily applied to a wide variety of models to further enhance both robustness and denoising performance.

\subsection{Model Similarity in the Neighborhoods of All the Sample}
In the previous section, based on the poor transferability of the PGD attack, we suggest that \textit{Hypothesis 1.2} (models are similar), instead of \textit{Hypothesis 1.1} (attack is strong), is true. In this subsection, we visualize the neighborhoods of all samples across the models to verify \textit{Hypothesis 1.2}.

We treat images, noise and perturbations as high-dimensional vectors. Then, we set the noisy image $u+n$ as the origin, use the Gaussian noise $n$ and the adversarial perturbation $v$, which is generated through denoising-PGD, to construct a two-dimensional subspace. The subspace can be visualized as shown in Figure \ref{2dsphere}. Specifically, the direction of the Gaussian noise is set at the $0$ degree, which corresponds to the positive vertical direction. In this subspace, we sample perturbations by rotating around the origin with various radii. Denoting
\begin{align}
    e_1 &= \frac {n}{\|n\|_2} \notag \\
    e_2 &= \frac{v-<v,n>n}{\|v-<v,n>n\|_2}\notag
\end{align}
as two orthonormal basis constructed from $n$ and $v$ through the Gram-Schmidt orthogonalization process, the sampling process is defined as:
\begin{align}
	\label{eq:10}
    s&=\left(e_1\cos\theta+e_2\sin\theta\right)\gamma\|v\|_2+n+u
\end{align}
where $\theta=2\pi i/N_i$,  $\gamma=j/N_j$, $i=1,\hdots,N_i$, $j=1,\hdots, N_j$. Based on Equation \ref{eq:10}, the samples belong to the subspace constructed by the basis vectios $n$ and $v$. They rotate from $0$ to $2\pi$ around the origin $n+u$, with the radius varying from $0$ to $\|v\|$. The samples are perturbations in the neighborhoods of $u+n$ within a selected subspace of interest. 

\begin{figure}[tb]\label{2dsphere}
	\centering
	\includegraphics[height=8cm]{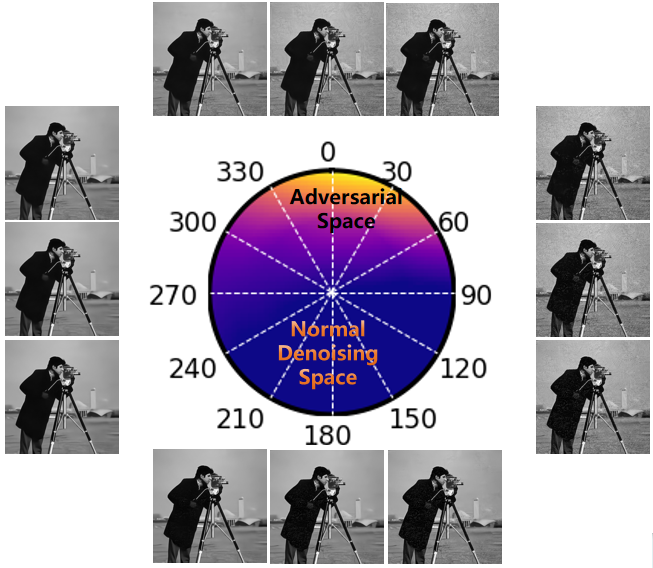}
	\caption{Radar diagram of samples in the neighborhood of the "Cameraman" image. Brighter colors indicate a stronger decrease in denoising performance.}
\end{figure}

The model performance of DnCNN on samples $s$ are shown in Figure \ref{2dsphere}. DnCNN experiences a significant performance decrease on samples generated around $0$ degree, but shows no performance decrease on samples generated from $90$ degrees to $270$ degrees. This indicates that the model is vulnerable only to perturbations around $0$ degree. For clarity, we reorganize the observations as follows:
\begin{itemize}
    \item \textit{Observation 1}: (connected set) Adversarial perturbations appear to belong to a connected set rather than multiple distinct regions.
    \item \textit{Observation 2}: (parallel to Gaussian) The model is vulnerable to perturbations that have directions similar to the Gaussian noise added to the image.
\end{itemize}

We further test the samples generated for DnCNN on various models. \textit{Observation 1} (connected set) and \textit{Observation 2} (parallel to Gaussian) are consistent across all the images in Set12, various of Gaussian noise, and all the tested models, including DnCNN\cite{DNCNN}, ECNDNet\cite{tian2024cross}, DnCNN-B\cite{DNCNN}, RDDCNN-B\cite{zhang2023robust}, DeamNet\cite{ren2021adaptive}, and DPIR\cite{zhang2021plug}. The result is shown in Figure \ref{f:alladvrange}. In the neighborhoods of all the tested images, all the models exhibit similar adversarial attack regions, allowing any adversarial perturbation transferable between models. Thus, this experiment supports \textit{Hypothesis 1.2}, i.e. the strong transferability is caused by denoising models instead of the attack algorithm.

\begin{figure}[!t]
	\centering
	\includegraphics[width=.8\textwidth]{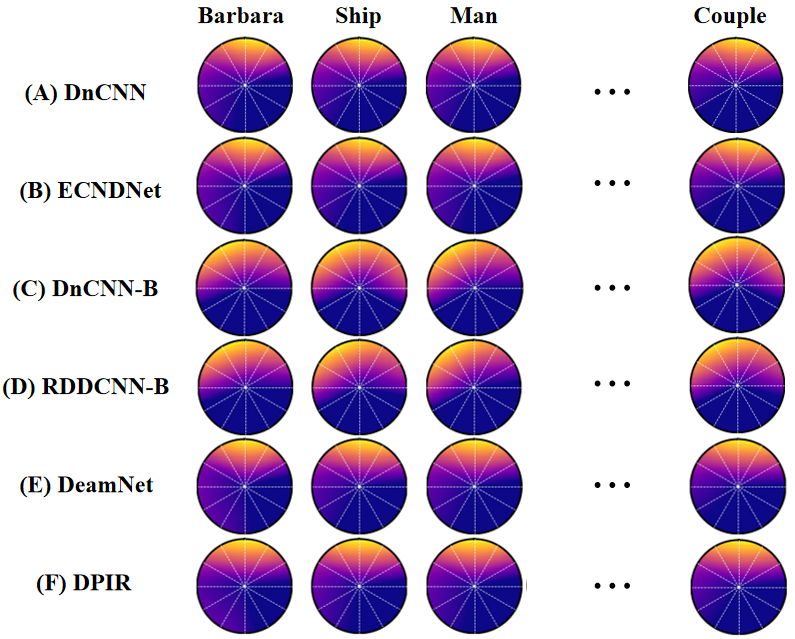}
	\caption{Adversarial attack regions of various models under adversarial samples generated for DnCNN.}
	\label{f:alladvrange}
\end{figure}

Another interesting observation is that
\begin{itemize}
    \item \textit{Observation 3}: (content free) Deep image denoising models demonstrate the similar adversarial space on images with different semantic contents.
\end{itemize}
It implies that the model’s local performance appears to be related only to the Gaussian noise, rather than the image content. The objective of the image denoising model is to recover clean images from noisy inputs while remaining unaffected by the inherent information contained in the input images. Thus, this characteristic is considered an ideal property for effective denoising models.

\subsection{Adversarial Perturbations Belong to a Single Continuous Set} 

In the previous section, we observed that adversarial perturbations appear to belong to a single continuous region within the 2-dimensional subspace constructed by $n$ and $v$. In this experiment, we will extend the analysis to a 3-dimensional subspace and verify \textit{Observation 1} (connected set) with different Gaussian noises added to the same image.

\begin{figure}[tb]\label{twosphere}
	\centering
	\includegraphics[width=.8\textwidth]{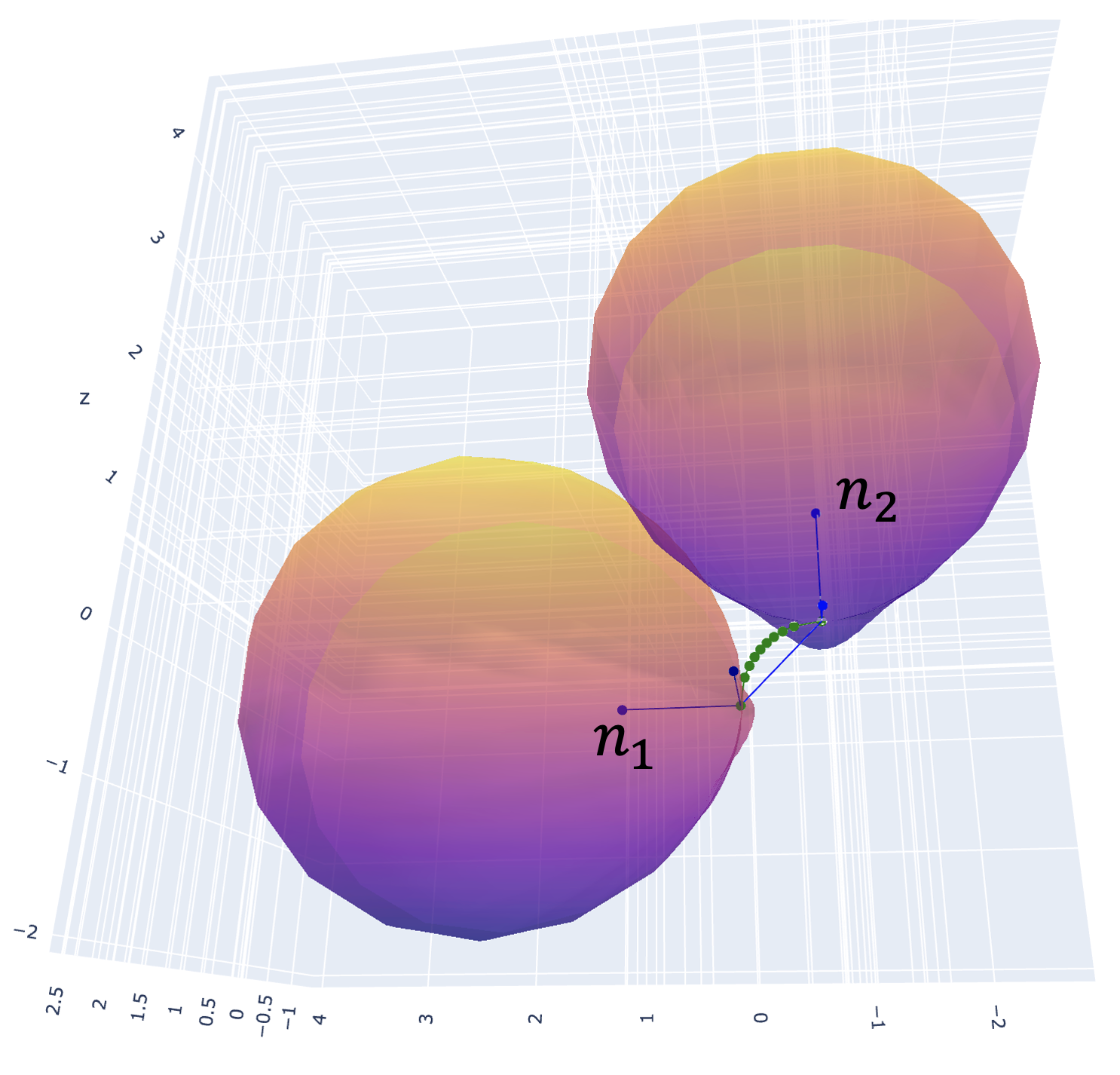}
	\caption{Schematic representation of the 3D confrontation space in the case of two Gaussian noise sampling. Where the x-axis y-axis z-axis are the Gaussian noise n, the Gaussian noise v and the Gaussian noise $n_1-n_2$ respectively. The green curve shows the convex combination path crafted for the two adversarial noises.}
	\label{fig:3}
\end{figure}

For the clean image $u$ and two different Gaussian noise $n_1$ and $n_2$, we create a 3-dimensional subspace at the origin $u$. The basis vectors are $n_1$, $n_2$, and $v_1$, where $v_1$ is the adversarial perturbation generated for $u+n_1$. In this experiment, we apply a different visualization for adversarial attack regions. As shown in Figure \ref{fig:3}, the origins of the two coordinates are $u+n_1$ and $u+n_2$. The directions of the vectors plotted are $n_1$, $n_2-n_1$, and $v_1$ for the first coordinate, and $n_2$, $n_1-n_2$, $v_2$ for the second coordinate. For each Gaussian noise $n_k$ (where $k=1,2$), denoting 
\begin{align}
    e_{1} &= \frac{n_1}{\|n_1\|_2} \notag \\
    e_{2} &= \frac{n_2-<n_2,e_{1}>e_{1}}{\|n_2-<n_2,e_{1}>e_{1}\|_2} \notag \\
    e_{3} &= \frac{v_1-<v_1,e_{1}>e_{1}-<v_1,e_{2}>e_{2}}{\|v_1-<v_1,e_{1}>e_{1}-<v_1,e_{2}>e_{2}\|_2} \notag 
\end{align}
as orthonormal basis constructed from $n_1$, $n_2$ and $v_1$ through the Gram-Schmidt orthogonalization process, we generate adversarial samples on the sphere centered at $u+n_k$ with a radius of $\|v_k\|_2$ as:
\begin{align}
    s_k&=\left(e_1\cos\phi\cos\theta+e_2\cos\phi\sin\theta+e_3\sin\phi\right)\|v_k\|_2+n_k+u
\end{align}
where $\theta=2\pi i/N_i$, $\phi=\pi j/N_j - \pi/2$, $i=1,\hdots,N_i$, $j=1,\hdots,N_j$. In Figure \ref{fig:3}, we plot
\begin{align}
\left(e_1\cos\phi\cos\theta+e_2\cos\phi\sin\theta+e_3\sin\phi\right)a_k+n_k+u \notag 
\end{align}
to present the adversarial region, where $a_k$ is the model's performance decrease under adversarial samples, such as PSNR values. As demonstrated, the adversarial samples parallel to the Gaussian noises $n_1$ and $n_2$ have the most significant adversarial effect, which is consistent with \textit{Observation 2} (parallel to Gaussian). The adversarial regions intersect with each other, supporting \textit{Observation 1} (connected set).

Further, we exam the connection between the adversarial regions. First, 
we generate more Gaussian noises combining $n_1$ and $n_2$ as
\begin{align}
n_k = \sqrt{\lambda}n_1+\sqrt{1-\lambda}n_2\notag
\end{align}
where $\lambda=(k-2)/N_k$, $k=3,\hdots,N_k+1$. It can be easily verified that $n_k$ are Gaussian white noises with the variance maintained. The Gaussian noises are presented in Figure \ref{fig:3} using green dots. It can be seen that the green dots are located beyond the adversarial regions. Then, we generate more adversarial samples combining
$v_1$ and $v_2$ as
\begin{align}
v_k = \sqrt{\lambda}(v_1+n_1)+\sqrt{1-\lambda}(v_2+n_2)\notag
\end{align}
where $\lambda=(k-2)/N_k$, $k=3,\hdots,N_k+1$. Experimentally, all the adversarial sample $v_k$ show adversarial effect. It implies that the adversarial regions corresponding to two arbitrary different Gaussian noises are connected. More generally speaking, the adversarial regions of an image $u$ are a single connected set. Thus, the experiment supports \textit{Observation 1} (connected set).

\subsection{Hypothesis for the Cause of Similarity}

As demonstrated above, the experiments support \textit{Hypothesis 1.2}, i.e. the models are similar in the neighborhoods of all the samples, making any adversarial sample transferable to other models. However, the designs and motivations behind the tested models are disparate. For example, DnCNN is a non-blind denoising model, RDDCNN-B is a blind denoising model, DPIR is a model-data hybrid-driven plug-and-play model, and DeamNet is a model-data hybrid-driven unfolding model. Intuitively, these models should be quite different. Therefore, under \textit{Hypothesis 1.2} (models are similar), we need to determine the reason for the model similarity across all the samples. This similarity could be due to the following reasons:
\begin{itemize}
    \item \textit{Hypothesis 2.1}: The model structures are similar.
    \item \textit{Hypothesis 2.2}: The image contents in the training datasets are similar.
    \item \textit{Hypothesis 2.3}: The artificial Gaussian noise used in the training process caused the similarity.
\end{itemize}
To the best of our knowledge, the three hypotheses mentioned above are the only possibilities we believe could result the similarity, as the model structure, training dataset, and artificial Gaussian noise are the primary elements determining deep image denoising models. In the following experiments, we will exclude \textit{Hypothesis 2.1} and \textit{Hypothesis 2.2}.

\subsubsection{Model Structural Similarity}

We hypothesise that the strong adversarial transferability of adversarial samples in deep image denoising models is a result of the high structural similarity between these models. DnCNN is a foundational model in the field of deep image denoising, which has inspired a number of deep image denoising models to design similar structural designs. The deep image denoising models proposed in recent years such as PANet\cite{mei2023pyramid} and RDDCNN are structurally similar to the DnCNN architecture. This phenomenon has led to a relatively high structural similarity between deep image denoising models. And in the adversarial studies of image classification, it has been observed that models with high network structural similarity are more likely to exhibit adversarial transferability between them. Specifically, image classification models based on the ResNet architecture have shown better adversarial transferability. However, adversarial samples produced for ResNet models showed significantly lower adversarial performance on models with architectures such as VGG.

In order to analyse the relationship between model structural similarity and adversarial transferability, we test the adversarial samples generated for the DnCNN model on SwinIR\cite{liang2021swinir}, which is based on the Transformer architecture.  Transformer is an effective model architecture, which is different from the CNN architecture. While CNNs are highly effective for image-related tasks due to their ability to capture spatial hierarchies, Transformers excel in handling sequential data and capturing long-range dependencies. As demonstrated in Table \ref{ta:transadv}, the Transformer model exhibits a significant adversarial effect on the adversarial samples generated for DnCNN. The image quality evaluation metrics show a heavy decrease in PSNR and SSIM. The experimental results show strong transferability between the CNN and Transformer structures in the deep image denoising problem. Thus, the model similarity is not caused by the structure similarity and \textit{Hypothesis 2.1}  shall be rejected.

\begin{table}
	\caption{The performance of SwinIR on adversarial samples generated for DnCNN.}
	\fontsize{7}{9}\selectfont
	\centering
	\begin{tabular}{ p{1.5cm}p{1cm}p{1.6cm}p{1cm}p{1.5cm}p{1cm}p{2cm} } 
		\hline
		\hline
		& Gauss & Adv & Gauss & Adv & Gauss & Adv \\
		Image & PSNR & PSNR(RES) & MAE & MAE(RES) & SSIM & SSIM(RES) \\
		\hline
		Cameraman & 30.2 & 29.4(-0.8) & 5.2 & 6.1(+0.9) & 0.87 & 0.80(-0.07) \\
		House & 34.2 &31.9(-2.3) & 3.4 & 4.9(+1.5) & 0.88 & 0.79(-0.09) \\
		Pepper & 31.2 &30.0(-1.2) & 4.9 & 6.0(+1.1) & 0.89 & 0.81(-0.08) \\
		Fishstar & 29.9 &29.3(-0.6) & 5.9 & 6.4(+0.5) & 0.88 & 0.86(-0.02) \\
		Monarch & 30.8 &29.8(-1.0) & 4.9 & 6.0(+1.1) & 0.93 & 0.86(-0.07) \\
		Airplane & 29.4 &28.8(-0.6) & 5.5 & 6.2(+0.7) & 0.88 & 0.83(-0.05) \\
		Parrot & 29.3 &28.8(-0.5) & 6.0 & 6.7(+0.7) & 0.86 & 0.81(-0.05) \\
		Lena & 31.4 &30.2(-1.2) & 4.6 & 5.8(+1.2)& 0.90 & 0.83(-0.07) \\
		Barbara & 30.3 &29.5(-0.8) & 5.6 & 6.3(+0.7)& 0.89 & 0.86(-0.03) \\
		Ship & 29.3 &28.4(-0.9) & 6.1 & 7.1(+1.0)& 0.84 & 0.79(-0.05) \\
		Man & 29.0 &28.4(-0.6) & 6.5 & 7.2(+0.7)& 0.82 & 0.78(-0.04) \\
		Couple& 29.1 &28.4(-0.7) & 6.4 & 7.2(+0.8)& 0.85 & 0.81(-0.04) \\
		\hline
		Avg. & 30.4 &29.4(-1.0) & 5.4 & 6.3(+0.9)& 0.87 & 0.82(-0.05) \\
		\hline
		\hline
	\end{tabular}
	\label{ta:transadv}
\end{table}

\subsubsection{Training Dataset Similarity}

Deep neural networks are mainly composed of two parts: the network structure and the training data. After excluding the effect of network structure similarity on adversarial transferability, we hypothesise that the high similarity of the training data leads to high adversarial transferability between deep image denoising models. The datasets used by deep image denoising models are often highly similar, such as DIV2K, BSD500 etc. The number of images used to training a deep denoising model is not as large as the datasets used for image classification, which contributes to the high degree of similarity in the training data in image denoising. Furthermore, under adversarial transferability analysis in image classification, it is often assumed that two deep network models are trained on the same dataset. This setting may help to improve the adversarial transferability between models.

If the adversarial transferability of a model is related to the training dataset similarity, then  irrelevant the training data should weaken the adversarial transferability of the model. To test this hypothesis, we trained two models DnCNN-1 and DnCNN-2, using entirely different training sets.  Specifically, we use two semantically-independent datasets for training DnCNN models, the animal dataset and the landscape dataset. Subsequently, we generated adversarial samples on DnCNN-1 and DnCNN-2 and evaluated their adversarial transferability on the other model. As shown in Figure \ref{fig:datachange}, the adversarial samples generated by DnCNN-2 exhibit the adversarial performance on DnCNN-1. DnCNN-2 exhibits the same phenomenon with the same setup. This demonstrates that adversarial samples exhibit strong transferability between deep image denoising models, even trained on different datasets. Thus, the model similarity is not caused by the dataset similarity and \textit{Hypothesis 2.2} can be rejected.

\begin{figure}[tb]
	\centering
	\includegraphics[height=9cm]{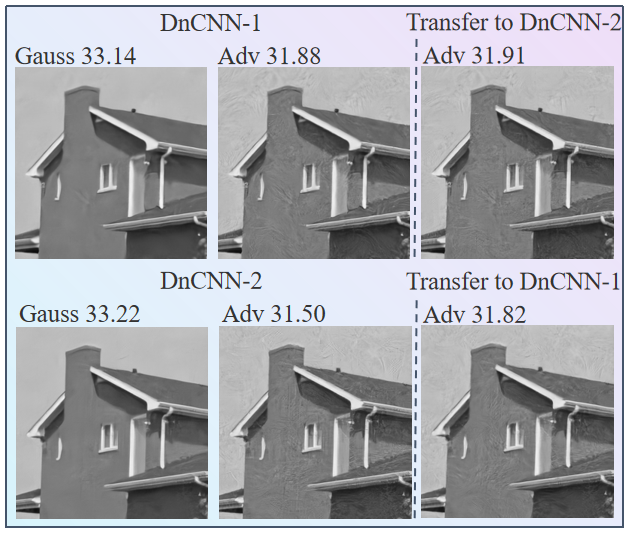}
	\caption{DnCNNs trained with different datasets have high adversarial transferability.}
	\label{fig:datachange}
\end{figure}

\subsubsection{Artificial Gaussian Noise in Patches}

After excluding \textit{Hypothesis} 2.1 and \textit{Hypothesis} 2.2, the high transferability is most likely caused by the artificially added Gaussian noise. The key difference between the image denoising task and the image classification task is the process of adding noise during the training of denoising models. Additionally, \textit{Observation 3} (content free) suggests that performance is related only to the Gaussian noise, not the image content. In the training process of deep image denoising models, images in the training dataset are segmented into small patches, with i.i.d. multivariate Gaussian noise added to each patch. Therefore, if certain properties of the artificial Gaussian noise dominate the adversarial attack, the attack should also be effective on each small patch. In the following experiment, we will examine whether a small patch of the adversarial perturbation can successfully attack the model.

Goodfellow et al. \cite{IP} argued that adversarial attacks can be viewed as the accumulation of the effects of perturbations on the model at each pixel. The high dimensionality of the data implies that small perturbations per pixel, in the limit of the $L_p$ norm, can have a significant impact on the output. If this statement holds true for deep image denoising, the adversarial perturbation will likely only be effective when applied to the model as a whole. Using a small patch of the perturbation will fail to attack the model. However, if the adversarial effect is primarily caused by the i.i.d. Gaussian noise, even a small patch of the perturbation could lead to a localized decrease in performance.

We generate the adversarial perturbation patches using two different methods. In the first method, we apply Denoising-PGD to generate an adversarial perturbation patch specifically for the image patch, then replace the original Gaussian noise in the patch with this adversarial perturbation. In the second method, we apply Denoising-PGD to generate an adversarial perturbation for the entire image, but retain the perturbation only within the region of the patch. As shown in Figure \ref{patch}, the $50\times 50$ patch is generated using the first method and the $30\times 30$ patch is generated using the second method. The adversarial perturbations generated by both methods can attack the model locally.

\begin{figure}[tb]\label{patch}
	\centering
	\includegraphics[height=3.5cm]{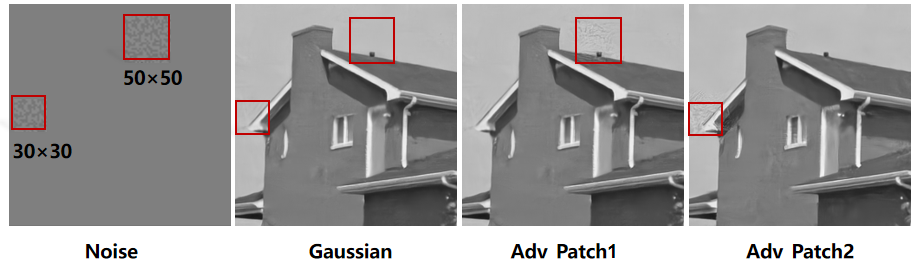}
	\caption{Effectiveness of adversarial attacks on the 50*50 and 30*30 regions of the image (\emph{Left}) and comparison with Gaussian noise denoising (\emph{Right})}
	
\end{figure}

Based on the experiment, adversarial attacks are localized in the image denoising task. This contrasts with the image classification task, where localized perturbations do not necessarily alter the classification result. Thus, it allowing the consideration of adversarial effects at the level of individual training patches without necessitating an analysis of the adversarial noise across the entire image. Additionally, the experiment supports \textit{Hypothesis 2.3}, i.e. the artificial Gaussian noise used in the training process caused the similarity.

\subsection{Theoretical Explanation for the Cause of Similarity}

As discussed in the hypotheses and experiments in the previous section, the high adversarial transferability is due to the similarity of models in the neighborhoods of all the samples. This similarity is most likely caused by the i.i.d. artificial Gaussian noise added during the training process. 

In this section, we propose a theoretical explanation for the cause of the similarity from the perspective of typical set. The i.i.d. multivariate Gaussian noise added to the clean image can be treated as a sequence of i.i.d. Gaussian random variables. According to the law of large numbers, the sample mean converges to the expectation of the Gaussian random variable. Similarly, when the sequence is considered as a multivariate random variable, it belongs to a typical set in probability. Consequently, we can use this set to describe all instances of artificial Gaussian noise added to images during the training process.

As will be demonstrated in the following theories, tiny perturbations can alter the probability densities of the samples, thus causing deviations from the typical set and leading to model failure. For a fixed $L_\infty$ or $L_2$ norm, when the direction of the perturbation aligns with the Gaussian noise, the adversarial sample deviates most from the typical set. This alignment causes all the samples to have similar adversarial attack regions. Thus, models have high adversarial transferability. In the following we will present some preliminaries:

\subsubsection{Preliminaries}

Entropy is a measure about uncertainty or randomness of random variables. It is defined as follows:
\begin{equation}
	H(X)=-\sum_{x\in\mathcal{X}} P( x) \log P(x)
\end{equation}
where $X$ is a discrete random variable, $\mathcal{X}$ is the sample space, $P(x)$ is the probability mass function, and $\log$ represents the $\log_2$ function. 
If $X$ is a continuous random variable, it is assumed to have an infinite number of possible values, resulting in theoretically infinite uncertainty. Consequently, its entropy would be infinite. Therefore, differential entropy is used instead and is defined as follows:
\begin{equation}
	h(X)=-\int_{x\in S}f(x)\log f(x)\text{d}x
\end{equation}
where $S$ is the support set of $f(x)$.

The weak law of large numbers (WLLN) states that as the number of samples increases, the sample mean converges to the expected value of the random variable in probability. 

\begin{theorem} \label{thm:wlln}(WLLN) \cite{thomas2006elements}
	Let $X_1, X_2, \hdots, X_n$ be a sequence of i.i.d. random variables, then
	\begin{align}
		\frac{1}{n}\sum_{i=1}^n X_i \rightarrow \mathbb{E}[X]  \text{ in probability}\notag
	\end{align}
\end{theorem}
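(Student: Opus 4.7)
The plan is to prove convergence in probability of the sample mean to its expectation by controlling the second moment of the deviation, and then invoking Chebyshev's inequality. The most direct route works under the additional assumption that $\mathrm{Var}(X_1) = \sigma^2 < \infty$, which is the standard textbook setting and suffices for the Gaussian-noise application in this paper (where variance is finite and known). Under the minimal i.i.d. hypothesis without finite variance, one needs either a truncation argument or characteristic functions; I would flag this as a distinct route at the end.

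First I would introduce the notation $\bar{X}_n = \frac{1}{n}\sum_{i=1}^n X_i$ and $\mu = \mathbb{E}[X_1]$. By linearity of expectation and the identical distribution assumption, $\mathbb{E}[\bar{X}_n] = \mu$. By independence of the $X_i$, the variance of a sum is the sum of the variances, so
\begin{equation}
\mathrm{Var}(\bar{X}_n) = \frac{1}{n^2}\sum_{i=1}^n \mathrm{Var}(X_i) = \frac{\sigma^2}{n}. \notag
\end{equation}
Then for any fixed $\varepsilon > 0$, Chebyshev's inequality applied to the random variable $\bar{X}_n$ around its mean $\mu$ yields
\begin{equation}
P\bigl(|\bar{X}_n - \mu| \geq \varepsilon\bigr) \leq \frac{\mathrm{Var}(\bar{X}_n)}{\varepsilon^2} = \frac{\sigma^2}{n\varepsilon^2}. \notag
\end{equation}
Letting $n \to \infty$ with $\varepsilon$ fixed, the right-hand side tends to $0$, which is exactly convergence in probability of $\bar{X}_n$ to $\mu$.

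The main obstacle is that the theorem as stated does not assume finite variance, only integrability (implicitly, so that $\mathbb{E}[X]$ exists). To handle this cleanly, I would fall back on the characteristic function approach: writing $\varphi(t)$ for the characteristic function of $X_1$, independence gives $\varphi_{\bar{X}_n}(t) = \varphi(t/n)^n$, and the first-order expansion $\varphi(s) = 1 + i\mu s + o(s)$ as $s \to 0$ (which uses only $\mathbb{E}[|X_1|] < \infty$) yields $\varphi_{\bar{X}_n}(t) \to e^{it\mu}$ pointwise. By Lévy's continuity theorem, $\bar{X}_n$ converges in distribution to the constant $\mu$, and convergence in distribution to a constant is equivalent to convergence in probability. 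Since the paper only needs WLLN as a tool to justify the typical-set construction for Gaussian noise, the finite-variance Chebyshev proof is already sufficient for the intended application; the characteristic-function argument is the cleanest way to state Theorem \ref{thm:wlln} in full generality.
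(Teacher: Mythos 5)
Your proof is correct, and it matches the standard argument: the paper does not prove this statement at all but simply cites it from Cover and Thomas, where the WLLN is established exactly by the Chebyshev route you give. Your observation that the finite-variance version suffices here is also accurate, since the paper only ever applies the theorem to Gaussian variables and their absolute values (Lemma 3.9), both of which have finite variance, so the characteristic-function extension, while correct, is not needed.
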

Based on the weak law of large numbers, an important property in information theory called the asymptotic equipartition property (AEP) can be derived.

\begin{theorem}\label{thm:aep} (AEP) \cite{thomas2006elements} Let $X^{(n)} = X_1, X_2, \hdots, X_n$ be a sequence of continuous random variables drawn i.i.d. according to the density $f(x)$. Then
	\begin{align}
		-\frac{1}{n}\log f(X^{(n)}) \rightarrow \mathbb{E}[-\log f(X)] = h(X) \text{ in probability}\notag
	\end{align}
	where $h(X)$ is the differential entropy of $X$. 
\end{theorem}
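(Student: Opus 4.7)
The plan is to reduce the statement to a direct application of the weak law of large numbers (Theorem \ref{thm:wlln}) that was just stated. The key observation is that since $X_1, X_2, \ldots, X_n$ are drawn i.i.d. from the density $f$, the joint density factorizes as $f(X^{(n)}) = \prod_{i=1}^n f(X_i)$, so taking $-\tfrac{1}{n}\log$ of both sides converts the product into an empirical mean of the $n$ i.i.d. random variables $Y_i := -\log f(X_i)$.

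First I would write out explicitly
\begin{equation*}
-\frac{1}{n}\log f(X^{(n)}) = -\frac{1}{n}\sum_{i=1}^n \log f(X_i) = \frac{1}{n}\sum_{i=1}^n Y_i,
\end{equation*}
and note that the $Y_i$ are i.i.d.\ because they are measurable functions of i.i.d.\ random variables. Next I would identify the expectation: by the definition of differential entropy,
\begin{equation*}
\mathbb{E}[Y_1] = \mathbb{E}[-\log f(X_1)] = -\int_S f(x)\log f(x)\,\mathrm{d}x = h(X).
\end{equation*}
Then I would invoke Theorem \ref{thm:wlln} applied to the sequence $\{Y_i\}$, which immediately yields $\tfrac{1}{n}\sum_{i=1}^n Y_i \to h(X)$ in probability, completing the proof.

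The only real subtlety, and the place I would be most careful, is the implicit regularity assumption needed so that $\mathbb{E}[Y_1]$ exists and is finite; the WLLN as stated in Theorem \ref{thm:wlln} requires $\mathbb{E}[X]$ to be well defined, so I would note that the theorem is formulated under the standing assumption that $h(X) = \mathbb{E}[-\log f(X)]$ exists (the equality in the statement already presumes this). Apart from this bookkeeping, the argument is essentially two lines of algebra plus one invocation of WLLN, so I do not expect any genuine obstacle; the main value of writing it out is to make the chain $f(X^{(n)}) \mapsto \sum \log f(X_i) \mapsto h(X)$ explicit for the later use of the typical set in bounding Gaussian noise and adversarial perturbations.
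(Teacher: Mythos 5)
Your proposal is correct and matches the paper's approach: the paper states immediately after the theorem that ``AEP can be proved directly by WLLN,'' which is precisely your reduction of $-\frac{1}{n}\log f(X^{(n)})$ to the empirical mean of the i.i.d. variables $Y_i = -\log f(X_i)$ followed by an invocation of Theorem~\ref{thm:wlln}. Your added remark about the integrability of $-\log f(X)$ is a reasonable piece of bookkeeping that the paper (following the cited textbook treatment) leaves implicit.
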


AEP is a similar property as WLNN in that they both describe the deterministic aspects of an i.i.d. random variables sequence, although WLLN focuses on the expectation while AEP focuses on the probability. In fact, AEP can be proved directly by WLLN. Based on AEP, the typical set is defined as:

\begin{definition}\label{def:ts} (Typical Set) \cite{thomas2006elements}
	The typical set of the i.i.d. random variable sequence $X^{(n)}$ drawn according to the density $f(x)$ is defined as
	\begin{align}
		A_\epsilon^{(n)}(X) = \left\{ (x_1,x_2,\hdots,x_n) : \Big|-\frac{1}{n} \log f(x_1,x_2,\hdots,x_n)-h(X)\Big| \leq \epsilon \right\} \notag
	\end{align}
	where $f(x_1,x_2,\hdots,x_n)=\prod_{i=1}^n f(x_i)$.
\end{definition}

\begin{theorem} \cite{thomas2006elements} \label{thm:ts} Typical set has the following properties
	\begin{enumerate}
		\item $P\{A_\epsilon^{(n)}(X)\}:= P\{X^{(n)}\in A_\epsilon^{(n)}(X)\} \xrightarrow{n\rightarrow \infty} 1$ with probability
		\item $\forall n$, $\text{Vol}(A_\epsilon^{(n)}(X)) \leq 2^{n(h(X)+\epsilon)}$
		\item $\forall n$ sufficient large, $\text{Vol}(A_\epsilon^{(n)}(X)) \geq (1-\epsilon) 2^{n(h(X)-\epsilon)}$
	\end{enumerate}
	where $\text{Vol}(A)=\int_A \text{d}x_1\text{d}x_2\hdots \text{d}x_n$.
\end{theorem}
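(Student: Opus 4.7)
The plan is to deduce all three properties from Definition~\ref{def:ts} together with AEP (Theorem~\ref{thm:aep}). The central observation is that the membership condition $\bigl|-\frac{1}{n}\log f(x^{(n)}) - h(X)\bigr| \leq \epsilon$ is equivalent, on $A_\epsilon^{(n)}(X)$, to the pointwise density sandwich
\[
2^{-n(h(X)+\epsilon)} \;\leq\; f(x^{(n)}) \;\leq\; 2^{-n(h(X)-\epsilon)}.
\]
Once this sandwich is written down, all three claims reduce to short integrations; the only additional ingredient is AEP itself, which supplies the probability statement needed for property~1 and, by extension, for the lower volume bound in property~3.

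For property~1, I would invoke AEP directly: convergence in probability of $-\frac{1}{n}\log f(X^{(n)})$ to $h(X)$ says exactly that for every $\epsilon>0$, the probability of $\bigl|-\frac{1}{n}\log f(X^{(n)}) - h(X)\bigr| > \epsilon$ tends to zero as $n \to \infty$, and the complement of this event is precisely $\{X^{(n)} \in A_\epsilon^{(n)}(X)\}$. A useful byproduct — that $P(A_\epsilon^{(n)}) > 1-\epsilon$ for all sufficiently large $n$ — will feed into the proof of property~3.

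For property~2, I would integrate the lower density bound over $A_\epsilon^{(n)}$, using that $f$ is a probability density: $1 \;=\; \int f(x^{(n)})\, dx^{(n)} \;\geq\; \int_{A_\epsilon^{(n)}} f(x^{(n)})\, dx^{(n)} \;\geq\; 2^{-n(h(X)+\epsilon)}\, \text{Vol}(A_\epsilon^{(n)})$, and then rearrange. This holds for all $n$, with no asymptotics required. For property~3, I would symmetrically combine the upper density bound with the byproduct from property~1: for $n$ large enough that $P(A_\epsilon^{(n)}) > 1-\epsilon$, the chain $1-\epsilon \;<\; \int_{A_\epsilon^{(n)}} f(x^{(n)})\, dx^{(n)} \;\leq\; 2^{-n(h(X)-\epsilon)}\, \text{Vol}(A_\epsilon^{(n)})$ rearranges to the stated lower volume bound.

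The main obstacle is not computational but logical: in property~3 the parameter $\epsilon$ plays two distinct roles — it defines the typical set and simultaneously bounds its probability mass — so one must fix $\epsilon$ first and then choose $n$ large enough via AEP, rather than treat the two $\epsilon$'s as independent parameters. Once this ordering is respected, the rest is routine bookkeeping around the sandwich inequality, and none of the steps requires more than a single integration against Lebesgue measure.
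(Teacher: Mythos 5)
Your proposal is correct: the density sandwich $2^{-n(h(X)+\epsilon)} \leq f(x^{(n)}) \leq 2^{-n(h(X)-\epsilon)}$ on the typical set, combined with AEP for the probability statement and a single integration for each volume bound, is exactly the standard argument in the cited reference (Cover and Thomas), which the paper invokes without reproducing a proof. Your point about fixing $\epsilon$ before choosing $n$ in property~3 is the right care to take, and nothing further is needed.
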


According to these properties, as the length of the sequence $n$ increases, the probability that the sequence belongs to the typical set tends to 1. However, the volume of the typical set remains bounded and small. Based on the definition of typical set and its properties, we describe the typical set of the Gaussian noise added to the image in the following section. We will show that the typical set of the Gaussian noise lies on a sphere when the dimension $n$ tends to infinity.

\subsubsection{Typical Set of the Artificial Gaussian Noise} The artificial Gaussian noise sampled during the training process can be viewed as an i.i.d. Gaussian random variable sequence. Denote the multivariate Gaussian noise added to an image patch as $X^{(n)}\sim\mathcal{N}(\mathbf{0},\sigma^2\mathbf{I}_n)$, where $X\sim \mathcal{N}(0,\sigma^2)$, $n=H\times W$, $H$ and $W$ are the height and width of the image patch, and $\sigma$ is the standard deviation. By Theorem \ref{thm:ts}(1), the probability $P\{X^{(n)}\in A_\epsilon^{(n)}(X)\} \xrightarrow{n\rightarrow \infty} 1$. Since $n$ is large for a image patch, $X^{(n)}$ has a high probability of belonging to the typical set $A_\epsilon^{(n)}(X)$. Thus, during the training process of the deep denoising model, the Gaussian noise we generate is actually sampled from the typical set with a probability approximately equal to 1. Further, we can obtain the following theorem:

\begin{lemma}
\cite{spong2020robot} 
\label{lemma:df_gaussian}
The differential entropy of the Gaussian random variable $X\sim \mathcal{N}(0,\sigma^2)$ is
\begin{equation}
h(X)=\log (\sqrt{2\pi e }\sigma) \notag
\end{equation}
\end{lemma}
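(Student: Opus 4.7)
The plan is to compute $h(X)$ directly from its definition, substituting the Gaussian density and evaluating the resulting integral using the fact that the second moment of $X \sim \mathcal{N}(0,\sigma^2)$ equals $\sigma^2$. This is a standard textbook computation; the only mild subtlety is tracking the base of the logarithm, since the preceding definitions use $\log = \log_2$, and I need to convert the natural logarithm coming from the exponential in the Gaussian density via $\ln a = (\log a)/(\log e)$, i.e.\ $\log e^a = a \log e$.

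First I would write the density
\begin{equation*}
f(x) = \frac{1}{\sqrt{2\pi\sigma^2}} \exp\!\left(-\frac{x^2}{2\sigma^2}\right),
\end{equation*}
and take its logarithm, obtaining
\begin{equation*}
\log f(x) = -\tfrac{1}{2}\log(2\pi\sigma^2) - \frac{x^2}{2\sigma^2}\log e.
\end{equation*}
Next, I would substitute this into the definition
\begin{equation*}
h(X) = -\int_{-\infty}^{\infty} f(x)\log f(x)\,\mathrm{d}x,
\end{equation*}
splitting the integrand into two terms: the constant piece contributes $\tfrac{1}{2}\log(2\pi\sigma^2)$ (using $\int f(x)\,\mathrm{d}x=1$), and the quadratic piece contributes $\frac{\log e}{2\sigma^2}\,\mathbb{E}[X^2]$.

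Then I would invoke $\mathbb{E}[X^2]=\sigma^2$ to reduce the second term to $\tfrac{1}{2}\log e$. Summing the two pieces yields
\begin{equation*}
h(X) = \tfrac{1}{2}\log(2\pi\sigma^2) + \tfrac{1}{2}\log e = \tfrac{1}{2}\log(2\pi e\sigma^2) = \log(\sqrt{2\pi e}\,\sigma),
\end{equation*}
which is the claimed identity. There is no real obstacle in this proof; the only thing to be careful about is the base-change factor $\log e$ when moving between $\ln$ and $\log_2$, so that the final constant packages correctly as $\sqrt{2\pi e}$ rather than $\sqrt{2\pi}$.
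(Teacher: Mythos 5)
Your computation is correct and is the standard direct evaluation of $h(X)=-\int f\log f$, including the right handling of the base-change factor $\log e$ so that the constant assembles into $\sqrt{2\pi e}$; note that the paper itself offers no proof of this lemma, merely citing it as a known result, so there is no in-paper argument to diverge from. Nothing further is needed.
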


\begin{theorem}\label{thm:x_bound}
	If $X^{(n)}$ is $n$-variate Gaussian distributed as $X^{(n)}\sim\mathcal{N}(\mathbf{0},\sigma^2\mathbf{I}_n)$, $\forall \epsilon>0$, $\forall \delta>0$, when $n$ is sufficient large, with probability at least $1-\delta$, a sample $x^{(n)}$ drawn from $X^{(n)}$ has bound
	\begin{align}
    n\sigma^2(1-2\epsilon) < \|x^{(n)}\|_2^2 < n\sigma^2(1+2\epsilon)
    \notag
	\end{align}
\end{theorem}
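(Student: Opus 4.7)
The plan is to specialize the asymptotic equipartition property (via the typical set machinery already set up in Theorem~\ref{thm:ts} and Definition~\ref{def:ts}) to the Gaussian case, and read off a bound on $\|x^{(n)}\|_2^2$ directly from the typical-set membership condition. First, I would fix $\epsilon>0$ and $\delta>0$, and invoke Theorem~\ref{thm:ts}(1) with tolerance $\epsilon$: since $P\{X^{(n)} \in A_\epsilon^{(n)}(X)\} \to 1$, there exists $N$ such that for all $n \geq N$, $P\{X^{(n)} \in A_\epsilon^{(n)}(X)\} \geq 1-\delta$. Hence with probability at least $1-\delta$, a drawn sample $x^{(n)}$ satisfies the typical-set inequality from Definition~\ref{def:ts}.

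Next I would unpack that inequality for the Gaussian density. Writing $f(x_i) = (2\pi\sigma^2)^{-1/2}\exp(-x_i^2/(2\sigma^2))$ and using natural logarithms (so that the entropy unit is nats and the constant $\log e$ drops out), a direct computation gives
\begin{equation}
-\frac{1}{n}\ln f(x^{(n)}) = \frac{1}{2}\ln(2\pi\sigma^2) + \frac{1}{2n\sigma^2}\sum_{i=1}^n x_i^2 = \frac{1}{2}\ln(2\pi\sigma^2) + \frac{\|x^{(n)}\|_2^2}{2n\sigma^2}.
\end{equation}
Subtracting $h(X) = \tfrac{1}{2}\ln(2\pi e \sigma^2) = \tfrac{1}{2}\ln(2\pi\sigma^2) + \tfrac{1}{2}$ from Lemma~\ref{lemma:df_gaussian} yields
\begin{equation}
-\frac{1}{n}\ln f(x^{(n)}) - h(X) = \frac{1}{2}\left(\frac{\|x^{(n)}\|_2^2}{n\sigma^2} - 1\right).
\end{equation}
The typical-set condition $|{-\tfrac{1}{n}\ln f(x^{(n)}) - h(X)}| \leq \epsilon$ is then equivalent to $\bigl|\|x^{(n)}\|_2^2/(n\sigma^2) - 1\bigr| \leq 2\epsilon$, which rearranges immediately to $n\sigma^2(1-2\epsilon) \leq \|x^{(n)}\|_2^2 \leq n\sigma^2(1+2\epsilon)$. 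To obtain the strict inequalities claimed in the theorem, I would apply the typical-set step with any $\epsilon' < \epsilon$ (for instance $\epsilon' = \epsilon/2$) and then relax back, so that the closed bound at level $\epsilon'$ implies the open bound at level $\epsilon$.

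The main obstacle is essentially bookkeeping rather than analytic difficulty: one must be careful that the logarithm base in Definition~\ref{def:ts} and the base implicit in the differential entropy of Lemma~\ref{lemma:df_gaussian} are consistent, since any base change rescales the ``$2$'' in front of $\epsilon$. Using nats throughout (or equivalently reparametrizing $\epsilon$) cleans this up and produces the stated constant. Beyond that, the argument is a direct transcription of the AEP into a concentration statement for the chi-squared-type quantity $\|x^{(n)}\|_2^2$, and no additional probabilistic tools (e.g., Chebyshev or Hanson--Wright) are needed because the typical set already encodes exactly the required concentration.
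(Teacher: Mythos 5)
Your proposal is correct and follows essentially the same route as the paper's own proof: invoke typical-set membership from Theorem~\ref{thm:ts}(1), expand $-\tfrac{1}{n}\log f(x^{(n)})$ for the Gaussian density, subtract $h(X)$ from Lemma~\ref{lemma:df_gaussian}, and rearrange to the stated bound. Your explicit remark about keeping the logarithm base consistent is in fact a point the paper silently glosses over (its displayed proof only yields the clean constant $2\epsilon$ in nats, despite the earlier convention $\log=\log_2$), so that bit of bookkeeping is a welcome addition rather than a deviation.
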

\begin{proof}
	By Theorem \ref{thm:ts}, the probability that $x^{(n)}$ belongs to the typical set $A_\epsilon^{(n)}(X)$ tending to $1$ as $n\rightarrow \infty$. Thus, $\forall \epsilon>0$, $\forall \delta>0$, $\exists n_0$ s.t. $\forall n \geq n_0$ there exists
	\begin{align}
		\mathbb{P}\left\{ \left|-\frac{1}{n}\log f(x^{(n)}) - h(X) \right| < \epsilon \right\} > 1- \delta\notag
	\end{align}
	Thus, when $n$ is sufficient large with probability at least $1-\delta$, we have 
	\begin{align}
		\left|-\frac{1}{n}\log f(x^{(n)}) - h(X) \right| < \epsilon \notag
	\end{align}
	Since $X^{(n)}\sim\mathcal{N}(\mathbf{0},\sigma^2\mathbf{I}_n)$,
	\begin{align}
		\left|-\frac{1}{n}\log (A e^{-\frac{\|x^{(n)}\|_2^2}{2\sigma^2}}) - h(X) \right| < \epsilon \notag
	\end{align}
	where $A = (2\pi\sigma^2)^{-\frac{n}{2}}$. Thus,
	\begin{align}
		2\sigma^2(\log A+ n(h(X)- \epsilon)) < \|x^{(n)}\|_2^2 < 2\sigma^2(\log A+ n(h(X)+\epsilon)) \notag
	\end{align}
	Since $X\sim\mathcal{N}(0,\sigma^2)$, by Lemma \ref{lemma:df_gaussian}, the differential entropy $h(X)=\log (\sqrt{2\pi e }\sigma)$. Then, 
	\begin{align}
		n\sigma^2(1-2\epsilon) < \|x^{(n)}\|_2^2 < n\sigma^2(1+2\epsilon) \notag
	\end{align}
\end{proof}

By Theorem \ref{thm:x_bound}
, we observe that the left and right bounds of $\|x^{(n)}\|$ are very close to each other. As $n\rightarrow \infty$, $\|x^{(n)}\|_2^2$ approaches $n\sigma^2$. In other words, the artificial Gaussian noise generated during the training process has an almost deterministic $L_2$ norm due to the high dimensionality. The artificial Gaussian noises are sampled on the sphere with radius $\sqrt{n\sigma^2}$ in the $\mathbb{R}^n$ space, not inside or outside, but specifically on the sphere's shell. Consequently, the deep model only learns from samples on the sphere shell. Perturbations that deviate the $L_2$ norms of samples are never encountered by the model, making them difficult to generalize. Thus, the model fail on these samples with small perturbations. 

In the following section, we will derive the typical set of the perturbed Gaussian noise. We will show that the typical set of the perturbed Gaussian noise is a bit larger than the typical set of the Gaussian noise. Moreover, when the perturbation is parallel to the Gaussian noise, it deviates from the typical set of the Gaussian noise the most, which is coincide with the \textit{Observation 2} (parallel to Gaussian).

\subsubsection{Typical Set of the Perturbed Gaussian Noise} 

Next, we explore the effects of adversarial perturbations on the  typical set. If the perturbations are bounded by their $L_2$ norm, the perturbed Gaussian noises belong
to a slightly larger typical set with probability approximately equal to 1.

\begin{lemma}\label{lemma:pdf_diff}
	Let $x^{(n)}$ be a sample drawn from $X^{(n)}\sim\mathcal{N}(\mathbf{0},\sigma^2\mathbf{I}_n)$. A perturbation $\xi\in\mathbb{R}^n$, $\|\xi\|_2\leq \eta$ is added to $x^{(n)}$. When $n$ is sufficient large, $\forall \epsilon>0$, $\log f(x^{(n)}+\xi)$ is bounded around $\log f(x^{(n)})$ as
	\begin{align}
		-\frac{\eta^2 + 2 \eta\sqrt{n\sigma^2(1+2\epsilon)}}{2\sigma^2}
		< \log f(x^{(n)}+\xi) - \log f(x^{(n)})< 
		-\frac{\eta^2 - 2 \eta\sqrt{n\sigma^2(1+2\epsilon)}}{2\sigma^2} \notag
	\end{align}
\end{lemma}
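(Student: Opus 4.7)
The plan is to use the explicit form of the Gaussian density so that the log-density difference reduces to a simple quadratic expression in $\xi$ and $x^{(n)}$, and then invoke Theorem~\ref{thm:x_bound} to control the norm of $x^{(n)}$ on a high-probability typical-set event. Since the normalizing constant $(2\pi\sigma^2)^{-n/2}$ cancels in the difference, a direct computation gives
\begin{align}
\log f(x^{(n)}+\xi) - \log f(x^{(n)}) = -\frac{\|x^{(n)}+\xi\|_2^2 - \|x^{(n)}\|_2^2}{2\sigma^2} = -\frac{\|\xi\|_2^2 + 2\langle x^{(n)}, \xi\rangle}{2\sigma^2},
\end{align}
so the task reduces to two-sided bounding of the quantity $\|\xi\|_2^2 + 2\langle x^{(n)},\xi\rangle$.

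Next I would bound the two summands separately. The squared-norm term satisfies $\|\xi\|_2^2 \leq \eta^2$ by hypothesis (and equals $\eta^2$ in the adversarially relevant regime where the perturbation saturates the constraint). For the cross term, Cauchy-Schwarz together with $\|\xi\|_2 \leq \eta$ gives $|\langle x^{(n)}, \xi\rangle| \leq \eta\,\|x^{(n)}\|_2$. To convert this into a deterministic bound, I would apply Theorem~\ref{thm:x_bound}: for any $\epsilon>0$ and any $\delta>0$, once $n$ is sufficiently large, the event $\|x^{(n)}\|_2 < \sqrt{n\sigma^2(1+2\epsilon)}$ holds with probability at least $1-\delta$. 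On this typical-set event the inner product is controlled as $|\langle x^{(n)},\xi\rangle| < \eta\sqrt{n\sigma^2(1+2\epsilon)}$.

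Finally, assembling the two pieces yields, on the same event,
\begin{align}
\eta^2 - 2\eta\sqrt{n\sigma^2(1+2\epsilon)} \,\leq\, \|\xi\|_2^2 + 2\langle x^{(n)},\xi\rangle \,\leq\, \eta^2 + 2\eta\sqrt{n\sigma^2(1+2\epsilon)},
\end{align}
and dividing through by $-2\sigma^2$ flips the inequalities to produce the stated two-sided bound on $\log f(x^{(n)}+\xi) - \log f(x^{(n)})$. The main subtlety is the probabilistic step: the bound on $\|x^{(n)}\|_2$ is an asymptotic high-probability statement inherited from Theorem~\ref{thm:x_bound}, so the conclusion should also be understood as holding with probability approaching $1$ as $n$ grows. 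A secondary bookkeeping point is that the lower side of the bound is tight precisely when $\xi$ is antiparallel to $x^{(n)}$ with norm $\eta$; since $\|x^{(n)}\|_2 \approx \sigma\sqrt{n} \gg \eta$ in the high-dimensional regime of interest, this configuration is admissible and gives the claimed form with no extra loss.
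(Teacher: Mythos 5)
Your proposal is correct and follows essentially the same route as the paper's proof: expand $\|x^{(n)}+\xi\|_2^2$, bound the cross term by Cauchy--Schwarz with $\|\xi\|_2\leq\eta$, and control $\|x^{(n)}\|_2$ via the high-probability bound of Theorem~\ref{thm:x_bound}. Your closing remarks are also on point --- the conclusion is indeed only a high-probability statement (a qualifier the lemma's wording omits but the paper's proof likewise relies on), and the observation that $\|x^{(n)}\|_2\gg\eta$ is exactly what justifies taking $\|\xi\|_2=\eta$ as the extremal case on the lower side.
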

\begin{proof}
	The log-p.d.f. of $x^{(n)}+\xi$ is 
	\begin{align}
		\log f(x^{(n)}+\xi) = \log (A e^{-\frac{\|x^{(n)}+\xi\|_2^2}{2\sigma^2}})\notag
	\end{align}
	where $A = (2\pi\sigma^2)^{-\frac{n}{2}}$. Since
	\begin{align}
		\|x^{(n)}+\xi\|^2_2 &= \|x^{(n)}\|^2_2 + \|\xi\|^2_2 + 2 <\xi, x^{(n)}>  \notag \\
		&\leq \|x^{(n)}\|^2_2 + \|\xi\|^2_2 + 2 \|\xi\|_2\|x^{(n)}\|_2\notag
	\end{align}
    where $<\cdot,\cdot>$ is the inner product.
	The equality holds when $\xi$ has the same direction with $x^{(n)}$. By $\|\xi\|_2\leq \eta$ we have
	\begin{align}
		\|x^{(n)}+\xi\|^2_2 \leq \|x^{(n)}\|^2_2 + \eta^2 + 2 \eta\|x^{(n)}\|_2\notag
	\end{align}
	Thus, the lower bound of the log-p.d.f. of $x^{(n)}+\xi$ is 
	\begin{align}\label{eq:pdf-adv}
		\log f(x^{(n)}+\xi) &= \log (A e^{-\frac{\|x^{(n)}+\xi\|^2_2}{2\sigma^2}})\notag \\
		&\geq \log (A e^{-\frac{\|x^{(n)}\|^2_2 + \eta^2 + 2 \eta\|x^{(n)}\|_2}{2\sigma^2}})\notag \\
		& = \log (A e^{-\frac{\|x^{(n)}\|^2_2}{2\sigma^2}}e^{-\frac{\eta^2 + 2 \eta\|x^{(n)}\|_2}{2\sigma^2}})\notag \\
		&= \log f(x^{(n)}) -\frac{\eta^2 }{2\sigma^2} -\frac{2 \eta\|x^{(n)}\|_2}{2\sigma^2}
	\end{align}
	By Theorem \ref{thm:x_bound}, $\forall \epsilon>0$ when $n$ is sufficient large, we have $\|x^{(n)}\|_2^2 < n\sigma^2(1+2\epsilon)$. Thus,
	\begin{align}
		\log f(x^{(n)}+\xi) - \log f(x^{(n)}) > -\frac{\eta^2 + 2 \eta\sqrt{n\sigma^2(1+2\epsilon)}}{2\sigma^2} \notag
	\end{align}
	With similar proof, we can obtain the upper bound as
	\begin{align}
		\log f(x^{(n)}+\xi) - \log f(x^{(n)}) < -\frac{\eta^2 - 2 \eta\sqrt{n\sigma^2(1+2\epsilon)}}{2\sigma^2} \notag
	\end{align}
\end{proof}

\textit{Remark 1}: when $\xi$ is parallel with $x^{(n)}$, $f(x^{(n)}+\xi)$ and $f(x^{(n)})$ has the maximum difference. It is coincide with the \textit{Observation 2} (parallel to Gaussian) that when the perturbation is parallel with the Gaussian noise, it has the most significant effect degrading the denoising performance..

\begin{theorem}\label{th:B}
	Let $x^{(n)}$ be a sample drawn from $X^{(n)}\sim\mathcal{N}(\mathbf{0},\sigma^2\mathbf{I}_n)$. A perturbation $\xi\in\mathbb{R}^n$, $\|\xi\|_2\leq \eta$ is added to $x^{(n)}$. When $n$ is sufficient large, $\forall \delta>0$, $\forall \epsilon>0$, with probability at least $1-\delta$,  ($x^{(n)}+\xi$) belongs to the typical set $\mathcal{A}_{B_2}^{(n)}$, where
	\begin{align}
		B_2=\frac{\eta^2 + 2 \eta\sqrt{n\sigma^2(1+2\epsilon)}}{2n\sigma^2}+\epsilon \notag
	\end{align}
\end{theorem}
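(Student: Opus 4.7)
The strategy is straightforward: unpack the definition of $\mathcal{A}_{B_2}^{(n)}$ and reduce the claim, via the triangle inequality, to (i) the bound on $\log f(x^{(n)}+\xi)-\log f(x^{(n)})$ already proved in Lemma~\ref{lemma:pdf_diff}, and (ii) the typical-set membership of the unperturbed sample $x^{(n)}$ guaranteed by Theorem~\ref{thm:ts}(1). By Definition~\ref{def:ts}, the conclusion $(x^{(n)}+\xi)\in\mathcal{A}_{B_2}^{(n)}$ is exactly the inequality
\begin{align}
\left|-\tfrac{1}{n}\log f(x^{(n)}+\xi) - h(X)\right| \leq B_2, \notag
\end{align}
so the job is to control the left-hand side by $B_2$.

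First, I would apply the triangle inequality to split the target quantity as
\begin{align}
\left|-\tfrac{1}{n}\log f(x^{(n)}+\xi) - h(X)\right| \leq \tfrac{1}{n}\bigl|\log f(x^{(n)}+\xi) - \log f(x^{(n)})\bigr| + \left|-\tfrac{1}{n}\log f(x^{(n)}) - h(X)\right|. \notag
\end{align}
Second, I would plug in Lemma~\ref{lemma:pdf_diff}, whose two-sided bound immediately gives
\begin{align}
\tfrac{1}{n}\bigl|\log f(x^{(n)}+\xi) - \log f(x^{(n)})\bigr| \leq \frac{\eta^{2} + 2\eta\sqrt{n\sigma^{2}(1+2\epsilon)}}{2n\sigma^{2}}, \notag
\end{align}
since the larger of the two signed bounds has this magnitude. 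Third, I would invoke Theorem~\ref{thm:ts}(1): for $n$ sufficiently large, with probability at least $1-\delta$ the sample $x^{(n)}$ lies in $A_\epsilon^{(n)}(X)$, which by Definition~\ref{def:ts} is exactly the statement $|-\tfrac{1}{n}\log f(x^{(n)}) - h(X)|\leq \epsilon$. Adding the two bounds produces precisely $B_2$.

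The main — essentially the only — delicate point is the probability accounting: Lemma~\ref{lemma:pdf_diff} is phrased deterministically but its proof tacitly uses the concentration $\|x^{(n)}\|_2^2 < n\sigma^2(1+2\epsilon)$ from Theorem~\ref{thm:x_bound}, which is itself a high-probability statement. Fortunately, this concentration event and the typical-set event $x^{(n)}\in A_\epsilon^{(n)}(X)$ are equivalent for the isotropic Gaussian (since the log-density depends on $x^{(n)}$ only through $\|x^{(n)}\|_2^2$), so both bounds hold on a single event of probability at least $1-\delta$ and no union-bound degradation is incurred. Combining the two estimates on this common event yields the claim.
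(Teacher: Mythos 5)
Your proposal is correct and follows essentially the same route as the paper: combine the two-sided bound of Lemma~\ref{lemma:pdf_diff} with the typical-set membership $\left|-\tfrac{1}{n}\log f(x^{(n)})-h(X)\right|<\epsilon$ of the unperturbed sample, the only cosmetic difference being that you symmetrize via the triangle inequality while the paper tracks the asymmetric interval $\Omega$ and takes the endpoint of larger magnitude. Your remark that the concentration event $\|x^{(n)}\|_2^2<n\sigma^2(1+2\epsilon)$ and the typical-set event coincide for the isotropic Gaussian is a valid and slightly more careful handling of the probability accounting than the paper makes explicit.
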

\begin{proof}
	By Lemma \ref{lemma:pdf_diff}, we have
	\begin{align}
		-\frac{1}{n}\log f(x^{(n)}+\xi) - h(X) &< -\frac{1}{n}\log f(x^{(n)}) - h(X) + \frac{\eta^2 + 2 \eta\sqrt{n\sigma^2(1+2\epsilon)}}{2n\sigma^2} \notag \\
		-\frac{1}{n}\log f(x^{(n)}+\xi) - h(X) &> -\frac{1}{n}\log f(x^{(n)}) - h(X) + \frac{\eta^2 - 2 \eta\sqrt{n\sigma^2(1+2\epsilon)}}{2n\sigma^2} \notag
	\end{align}
	According to Theorem \ref{thm:ts}, $\forall \epsilon>0$, $\forall \delta>0$, $\exists n_0$ s.t. $\forall n \geq n_0$ there exists
	\begin{align}
		\mathbb{P}\left\{ \left|-\frac{1}{n}\log f(x^{(n)}) - h(X) \right| < \epsilon \right\} > 1- \delta\notag
	\end{align}
	Thus,
	\begin{align}
		\mathbb{P}\left\{ \left(-\frac{1}{n}\log f(x^{(n)}+\xi) - h(X) \right) \in \Omega \right\} > 1- \delta\notag
	\end{align}
	where
	\begin{align}
		\Omega = (\frac{\eta^2 - 2 \eta\sqrt{n\sigma^2(1+2\epsilon)}}{2n\sigma^2}-\epsilon ,\frac{\eta^2 + 2 \eta\sqrt{n\sigma^2(1+2\epsilon)}}{2n\sigma^2}+\epsilon) \notag
	\end{align}
	Since
	\begin{align}
		\left|\frac{\eta^2 - 2 \eta\sqrt{n\sigma^2(1+2\epsilon)}}{2n\sigma^2}-\epsilon\right| < \left|\frac{\eta^2 + 2 \eta\sqrt{n\sigma^2(1+2\epsilon)}}{2n\sigma^2}+\epsilon\right| \notag
	\end{align}
	Taking
	\begin{align}
		B_2=\frac{\eta^2 + 2 \eta\sqrt{n\sigma^2(1+2\epsilon)}}{2n\sigma^2}+\epsilon \notag
	\end{align}
	we have
	\begin{align}
		\mathbb{P}\left\{ \left|-\frac{1}{n}\log f(x^{(n)}+\xi) - h(X) \right| <B_2 \right\} > 1- \delta\notag
	\end{align}
	Or equivalently expressed by the typical set
	\begin{align}
		\mathbb{P}\left\{(x^{(n)}+\xi) \in \mathcal{A}_{B_2}^{(n)}\right\} > 1- \delta \notag
	\end{align}
\end{proof}

\textit{Remark 2}: since $n$ is very large,  $\frac{\eta^2 + 2 \eta\sqrt{n\sigma^2(1+2\epsilon)}}{2n\sigma^2}$ is small. Thus, $B_2$ is only larger than $\epsilon$ by a little bit.

Theorem \ref{th:B} proved that the perturbations bounded by $L_2$ norm will deviate from the original typical set of the Gaussian noises. The deviation is most significant when the perturbation has the same direction with the Gaussian noise which coincide with our experimental observations. Further, although the adversarial samples deviate from the original typical set, they still remain in another slightly larger typical set.

The following theorems consider the case that perturbations bounded by $L_\infty$ norm. In Lemma \ref{lemma:bound_inf} we derive the explicit solution of the maximum deviation that the perturbation can derivate from the original sample.  In Lemma \ref{lemma:abs_normal_enp}, \ref{thm:x1_bound}, and \ref{lemma:pdf_diff_infty} we derive the maximum deviation of the perturbation in terms of the log-probability density. In Theorem \ref{th:B_inf}, we prove that the perturbed noise bounded by $L_\infty$ norm belongs to a much larger typical set.

\begin{lemma} \label{lemma:bound_inf}
	The optimal solution of the quadratic programming 
	\begin{align}
		&\max_x \| x+c \|^2_2 \notag \\
		&s.t. \; \|x\|_\infty \leq \eta \notag
	\end{align}
	is reached at $x=\text{sign}(c)\eta$, where $x\in\mathbb{R}^n$ and $c\in\mathbb{R}^n$.
\end{lemma}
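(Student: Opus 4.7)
The plan is to reduce the $n$-dimensional quadratic program to $n$ independent one-dimensional problems and solve each in closed form. Since
\[
\|x+c\|_2^2 \;=\; \sum_{i=1}^n (x_i+c_i)^2
\]
and the constraint $\|x\|_\infty \le \eta$ decouples into $|x_i|\le \eta$ for each coordinate, the global maximum is the sum of coordinate-wise maxima. So I would first observe this separability and then focus on a single coordinate subproblem: maximize $(x_i+c_i)^2$ over the interval $[-\eta,\eta]$.

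Next, I would argue that $(x_i+c_i)^2$ is a convex quadratic in $x_i$, so on a closed interval its maximum is attained at one of the two endpoints $x_i=\pm\eta$. Comparing the candidate values $(\eta+c_i)^2$ and $(-\eta+c_i)^2$, a short calculation (expand and cancel $\eta^2+c_i^2$) shows that
\[
(\eta+c_i)^2 - (-\eta+c_i)^2 \;=\; 4\eta\, c_i ,
\]
which has the same sign as $c_i$. Hence if $c_i>0$ the maximum is at $x_i=\eta$, and if $c_i<0$ the maximum is at $x_i=-\eta$; in both cases the maximizer is $x_i=\mathrm{sign}(c_i)\,\eta$. (If $c_i=0$ either endpoint attains the same value, so the formula still gives an optimal choice under the convention $\mathrm{sign}(0)\in\{-1,+1\}$.)

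Assembling the coordinate-wise maximizers produces $x=\mathrm{sign}(c)\,\eta$, where $\mathrm{sign}$ is applied componentwise, and the maximum value is $\sum_i (\eta+|c_i|)^2 = (\eta\sqrt{n}+\|c\|_1/\sqrt{n}\cdot\ldots)$—but the explicit value is not required by the statement. This completes the proof. There is no real obstacle here; the only subtle point is the handling of coordinates with $c_i=0$, which is merely a matter of sign convention and does not affect optimality.
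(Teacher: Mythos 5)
Your proof is correct, and it takes a genuinely different route from the paper. The paper invokes the KKT conditions: it asserts the program is convex, writes down stationarity and complementary slackness for the box constraints, and reads off $x_i=\mathrm{sign}(c_i)\eta$ from the resulting sign cases. You instead exploit separability of both the objective $\sum_i(x_i+c_i)^2$ and the constraint $|x_i|\le\eta$, note that a convex quadratic on a closed interval attains its maximum at an endpoint, and compare the two endpoints via $(\eta+c_i)^2-(-\eta+c_i)^2=4\eta c_i$. Your argument is arguably the safer one: the problem is a \emph{maximization} of a convex function, which is not a convex program, so the KKT conditions are necessary but not sufficient for global optimality there; the paper's route silently relies on the reader filling in why the stationary point it selects is the global maximizer, whereas your endpoint comparison certifies global optimality directly and coordinate by coordinate. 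Two minor blemishes in your write-up: the final displayed expression for the optimal value is garbled (it should simply be $\sum_i(\eta+|c_i|)^2$, and in any case is not needed), and your remark on $c_i=0$ is apt --- with the usual convention $\mathrm{sign}(0)=0$ the stated maximizer is not optimal in those coordinates, though in the paper's application $c$ is a Gaussian sample so this happens with probability zero.
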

\begin{proof}
	The above quadratic programming is convex. Thus, by KKT condition, there exists $\mu_i$ and $\lambda_i$, $i=1,\hdots,n$ such that
	\begin{align}
		\mu_i \geq 0 \notag \\
		\lambda_i \geq 0 \notag \\
		x_i - \eta \leq 0 \notag \\
		-x_i - \eta \leq 0 \notag \\
		\mu_i(x_i-\eta) = 0 \notag \\
		\lambda_i(-x_i-\eta) = 0 \notag \\
		-2c_i - 2x_i + \mu_i - \lambda_i =0 \notag
	\end{align}
	Thus, there exists
	\begin{align}
		&\mu_i=0, \lambda_i >0, c < \eta, x_i = -\eta \notag \\
		&\mu_i>0, \lambda_i =0, c > \eta, x_i = \eta \notag
	\end{align}
	Combining the above solutions, 
    $x_i = \text{sign}(c_i)\eta$
    is the necessary and sufficient condition.
\end{proof}

\begin{lemma}\label{lemma:abs_normal_enp}
	Given $X\sim\mathcal{N}(0,\sigma^2)$, $Y = |X|$ has p.d.f.
$f_Y(x;\sigma^2)=\sqrt{\frac2{\pi\sigma^2}}e^{-\frac{x^2}{2\sigma^2}} $
and mean
$
\mu(Y) = \sigma \sqrt{\frac{2}{\pi}} \notag
$.
\end{lemma}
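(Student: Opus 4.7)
The plan is to derive both the density and the mean by elementary means, treating this as a standard half-normal distribution calculation. I expect no real obstacle here; the whole lemma follows from a symmetry argument for the density and a substitution for the mean.

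First, I will derive the density via the CDF technique. For $y \geq 0$, observe that $\{|X| \leq y\} = \{-y \leq X \leq y\}$, so by symmetry of the Gaussian density about the origin,
\begin{equation}
F_Y(y) \;=\; \mathbb{P}(|X| \leq y) \;=\; 2F_X(y) - 1. \notag
\end{equation}
Differentiating in $y$ gives $f_Y(y) = 2 f_X(y)$ on $[0,\infty)$, and substituting the Gaussian density
$f_X(x) = (2\pi\sigma^2)^{-1/2} e^{-x^2/(2\sigma^2)}$
yields the claimed expression $f_Y(x;\sigma^2) = \sqrt{2/(\pi\sigma^2)}\, e^{-x^2/(2\sigma^2)}$ for $x \geq 0$.

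Second, I will compute the mean by a direct integration. Starting from
\begin{equation}
\mu(Y) \;=\; \int_0^\infty x\, \sqrt{\tfrac{2}{\pi\sigma^2}}\, e^{-\frac{x^2}{2\sigma^2}} \, dx, \notag
\end{equation}
I substitute $u = x^2/(2\sigma^2)$, so that $x\, dx = \sigma^2\, du$, and the integral becomes $\sigma^2 \int_0^\infty e^{-u}\, du = \sigma^2$. Multiplying by the prefactor $\sqrt{2/(\pi\sigma^2)}$ gives $\mu(Y) = \sigma\sqrt{2/\pi}$, as claimed.

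Both steps are standard manipulations; the only care needed is tracking the support of $Y$ (only $y \geq 0$) and keeping the factor of $2$ in the density when folding the two tails of $X$ onto the positive axis. No deeper machinery is required.
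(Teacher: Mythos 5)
Your proof is correct and follows essentially the same route as the paper: both fold the two Gaussian tails onto the positive axis to obtain the half-normal density (you derive the factor of $2$ via the CDF where the paper quotes the folded-normal formula with $\mu=0$), and both evaluate the mean by the same substitution $u = x^2/(2\sigma^2)$. Your version is slightly more self-contained but mathematically identical.
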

\begin{proof}
 $Y$ has a folded normal distribution, which probability density to the left of $x = 0$ is folded over by taking the absolute value. Let $x$ be a sample drawn from $Y$, which has the following p.d.f.:
	\begin{align}
		f_Y(x;\mu,\sigma^2)=\frac1{\sqrt{2\pi\sigma^2}}e^{-\frac{(x-\mu)^2}{2\sigma^2}}+\frac1{\sqrt{2\pi\sigma^2}}e^{-\frac{(x+\mu)^2}{2\sigma^2}} \notag
	\end{align}
Since the mean is $\mu=0$, it can be further written as
	\begin{align}
		f_Y(x;\sigma^2)=\sqrt{\frac2{\pi\sigma^2}}e^{-\frac{x^2}{2\sigma^2}} \notag
	\end{align}
The mean of $Y$ is
\begin{align}
\mu(Y) &= \int_0^\infty x f_Y(x;\sigma^2) \text{d} x \notag\\
&=\int_0^\infty \frac{1}{2}\sqrt{\frac2{\pi\sigma^2}}e^{-\frac{x^2}{2\sigma^2}} \text{d} x^2  \notag\\
&=\sigma \sqrt{\frac{2}{\pi}}\notag
\end{align}
\end{proof}

\begin{lemma}\label{thm:x1_bound}
	If $X^{(n)}$ is $n$-variate normally distributed as $X^{(n)}\sim\mathcal{N}(\mathbf{0},\sigma^2\mathbf{I}_n)$, when $n$ is sufficient large, $\forall \epsilon>0$, $\forall \delta>0$, with probability at least $1-\delta$, a sample drawn from $X^{(n)}$ has bound
	\begin{align}
		n\sigma\sqrt{\frac{2}{\pi}}-\epsilon < \|x^{(n)}\|_1 <  n\sigma\sqrt{\frac{2}{\pi}}+\epsilon \notag
	\end{align}
\end{lemma}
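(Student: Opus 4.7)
The plan is to recognize that $\|x^{(n)}\|_1 = \sum_{i=1}^n |x_i|$ is a sum of $n$ i.i.d.\ nonnegative random variables and then invoke the weak law of large numbers directly, in parallel to how Theorem \ref{thm:x_bound} was derived from the asymptotic equipartition property.

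First I would note that, since $X^{(n)}\sim \mathcal{N}(\mathbf{0}, \sigma^2 \mathbf{I}_n)$, the coordinates $X_1, \ldots, X_n$ are i.i.d.\ $\mathcal{N}(0,\sigma^2)$; therefore the absolute values $|X_1|, \ldots, |X_n|$ are also i.i.d., each following the half-normal distribution derived in Lemma \ref{lemma:abs_normal_enp}, with mean $\mu := \sigma\sqrt{2/\pi}$. That mean is supplied exactly by Lemma \ref{lemma:abs_normal_enp}, so it can be imported directly without any additional integration.

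Next, applying Theorem \ref{thm:wlln} to the i.i.d.\ sequence $\{|X_i|\}$ yields
\begin{equation}
\frac{1}{n}\|X^{(n)}\|_1 \;=\; \frac{1}{n}\sum_{i=1}^n |X_i| \;\longrightarrow\; \mu \quad \text{in probability.} \notag
\end{equation}
Unpacking the definition of convergence in probability, for every $\eta > 0$ and every $\delta > 0$ there exists $n_0$ such that whenever $n \geq n_0$,
\begin{equation}
\mathbb{P}\left\{\left|\frac{1}{n}\|X^{(n)}\|_1 - \mu\right| < \eta\right\} > 1-\delta. \notag
\end{equation}
Multiplying through by $n$ gives the two-sided bound $n\mu - n\eta < \|X^{(n)}\|_1 < n\mu + n\eta$, and choosing $\eta$ so that $n\eta$ matches the $\epsilon$ in the statement (any positive tolerance is admissible in WLLN) recovers the displayed inequality.

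\textbf{Main obstacle.} The argument is essentially routine once the reduction to $\{|X_i|\}$ and its mean are in place; WLLN then carries all of the probabilistic content. The only care required is in the bookkeeping of the deviation: WLLN naturally produces an interval whose half-width scales with $n$ around the mean $n\mu$, while the lemma writes the half-width simply as $\epsilon$. Because both tolerances are arbitrary positive quantities, this is a harmless rescaling. Structurally the proof is a direct analogue of Theorem \ref{thm:x_bound}, with Lemma \ref{lemma:abs_normal_enp} replacing Lemma \ref{lemma:df_gaussian} for the computation of the relevant first moment, and WLLN replacing AEP as the limit law, reflecting that the $L_1$ norm concentrates through sample-average convergence whereas the squared $L_2$ norm was made to concentrate through the Gaussian log-density that underlies AEP.
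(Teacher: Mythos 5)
Your proposal matches the paper's proof essentially line for line: the paper likewise writes $\|x^{(n)}\|_1=\sum_{i=1}^n |x_i|$, treats the $|x_i|$ as i.i.d.\ draws from $Y=|X|$ with mean $\sigma\sqrt{2/\pi}$ imported from Lemma \ref{lemma:abs_normal_enp}, and invokes the weak law of large numbers (Theorem \ref{thm:wlln}) to conclude. The tolerance-scaling point you flag in your closing paragraph (WLLN controls the unnormalized sum only to within $n\eta$ of $n\sigma\sqrt{2/\pi}$, not within a fixed $\epsilon$) is handled no more carefully in the paper, whose proof passes directly from the in-probability limit to the stated $\pm\epsilon$ bound.
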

\begin{proof}
By the definition of $L_1$ norm, there exists 
	\begin{align}
		\left\|x^{(n)}\right\|_1=\sum\limits_{i=1}^n\left|x_i\right|  \notag
	\end{align}
where $x_i\sim \mathcal{N}(0,\sigma^2)$. The absolute value of $x_i$ can be seen as samples i.i.d. drawn from $Y=|X|$. By Theorem \ref{thm:wlln}, we have
	\begin{align}
		\frac{1}{n}\sum\limits_{i=1}^n\left|x_i\right| \rightarrow \mathbb{E}[Y]  \text{ in probability}\notag
	\end{align} 
By Lemma \ref{lemma:abs_normal_enp},
	\begin{align}
		\sum\limits_{i=1}^n\left|x^{(n)}_i\right| \rightarrow n\sigma\sqrt{\frac{2}{\pi}}   \text{ in probability}\notag
	\end{align}
	Thus, when $n$ is sufficient large with probability at least $1-\delta$, we have 
	\begin{align}
		n\sigma\sqrt{\frac{2}{\pi}}-\epsilon<\|x^{(n)}\|_1 < n\sigma\sqrt{\frac{2}{\pi}}+\epsilon \notag
	\end{align}
\end{proof}

\begin{lemma}\label{lemma:pdf_diff_infty}
	Let $x^{(n)}$ be a sample drawn from $X^{(n)}\sim\mathcal{N}(\mathbf{0},\sigma^2\mathbf{I}_n)$. A perturbation $\xi\in\mathbb{R}^n$, $\|\xi\|_\infty \leq \eta$ is added to $x^{(n)}$. When $n$ is sufficient large, $\forall \epsilon>0$,  $\log f(x^{(n)}+\xi)$ is bounded around $\log f(x^{(n)})$ as
	\begin{align}
		-\frac{n\eta^2 + 2 \eta(n\sigma\sqrt{\frac{2}{\pi}}+\epsilon)}{2\sigma^2}
		< \log f(x^{(n)}+\xi) - \log f(x^{(n)})< 
		-\frac{n\eta^2 - 2 \eta(n\sigma\sqrt{\frac{2}{\pi}}+\epsilon)}{2\sigma^2}   \notag
	\end{align}
\end{lemma}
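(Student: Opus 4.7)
The plan is to mirror the structure of Lemma \ref{lemma:pdf_diff}, but replacing every $L_2$-based bound on $\xi$ with its $L_\infty$ counterpart, and using Lemma \ref{thm:x1_bound} in place of Theorem \ref{thm:x_bound} to control the resulting $\|x^{(n)}\|_1$ term. The starting point is the standard expansion
\begin{align}
\|x^{(n)}+\xi\|_2^2 = \|x^{(n)}\|_2^2 + \|\xi\|_2^2 + 2\langle \xi, x^{(n)} \rangle. \notag
\end{align}
Under $\|\xi\|_\infty \le \eta$, I would invoke Hölder's inequality $|\langle \xi, x^{(n)}\rangle| \le \|\xi\|_\infty \|x^{(n)}\|_1 \le \eta\|x^{(n)}\|_1$ for the cross term, and use $\|\xi\|_2^2 \le n\|\xi\|_\infty^2 \le n\eta^2$ for the norm term. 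Lemma \ref{lemma:bound_inf} guarantees that the maximum $\|x^{(n)}+\xi\|_2^2$ is attained precisely at $\xi=\mathrm{sign}(x^{(n)})\,\eta$, at which point both $\|\xi\|_2^2=n\eta^2$ and $\langle\xi,x^{(n)}\rangle=\eta\|x^{(n)}\|_1$ are tight. Together these yield
\begin{align}
\|x^{(n)}\|_2^2 + n\eta^2 - 2\eta\|x^{(n)}\|_1 \le \|x^{(n)}+\xi\|_2^2 \le \|x^{(n)}\|_2^2 + n\eta^2 + 2\eta\|x^{(n)}\|_1. \notag
\end{align}

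Next, I would plug these two inequalities into the difference of log-densities, exactly as in Eq.~\eqref{eq:pdf-adv} of Lemma \ref{lemma:pdf_diff}: since $\log f(x^{(n)}+\xi) - \log f(x^{(n)}) = -(\|x^{(n)}+\xi\|_2^2 - \|x^{(n)}\|_2^2)/(2\sigma^2)$, the upper bound on the squared norm produces the lower bound on the log-density difference, and vice versa. This gives the intermediate sandwich
\begin{align}
-\frac{n\eta^2 + 2\eta\|x^{(n)}\|_1}{2\sigma^2} < \log f(x^{(n)}+\xi) - \log f(x^{(n)}) < -\frac{n\eta^2 - 2\eta\|x^{(n)}\|_1}{2\sigma^2}. \notag
\end{align}

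Finally, I would apply Lemma \ref{thm:x1_bound}, which states that for sufficiently large $n$, with probability at least $1-\delta$, $\|x^{(n)}\|_1 < n\sigma\sqrt{2/\pi} + \epsilon$. Substituting this upper bound is monotone in the correct direction for both sides of the sandwich (the $+2\eta\|x^{(n)}\|_1$ in the lower-bound numerator and the $-2\eta\|x^{(n)}\|_1$ in the upper-bound numerator both become more extreme as $\|x^{(n)}\|_1$ grows), and the stated inequalities follow. I would also note, in parallel to Remark 1, that equality in both Hölder and in Lemma \ref{lemma:bound_inf} happens exactly when $\xi$ is aligned sign-wise with $x^{(n)}$, matching \emph{Observation 2} (parallel to Gaussian).

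The main obstacle I anticipate is a small technicality in the lower bound on $\|x^{(n)}+\xi\|_2^2$: strictly, the minimum over $\|\xi\|_\infty\le\eta$ equals $\|x^{(n)}\|_2^2 + n\eta^2 - 2\eta\|x^{(n)}\|_1$ only when every coordinate satisfies $|x_i|>\eta$, so that the optimal $\xi_i = -\mathrm{sign}(x_i)\eta$ lies on the boundary rather than at the interior minimum $\xi_i = -x_i$. For a $\mathcal{N}(0,\sigma^2)$ coordinate this condition fails with probability $2\Phi(-\eta/\sigma)$, so for large $n$ and $\eta$ not too small relative to $\sigma$ it still holds jointly with high probability; a fully rigorous argument would absorb the exceptional coordinates into an additional error term that can be merged with $\epsilon$ via the Weak Law of Large Numbers. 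This case analysis is the only step that is substantively more delicate than the $L_2$ proof of Lemma \ref{lemma:pdf_diff}, where reverse triangle inequality handles both directions uniformly.
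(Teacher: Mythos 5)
Your derivation of the stated inequalities follows the paper's proof essentially step for step: bound the cross term $\langle\xi,x^{(n)}\rangle$ by $\eta\|x^{(n)}\|_1$ (the paper invokes Lemma \ref{lemma:bound_inf}, you invoke H\"older, which is the same estimate), substitute into the Gaussian log-density exactly as in Equation \ref{eq:pdf-adv}, and control $\|x^{(n)}\|_1$ by Lemma \ref{thm:x1_bound}. The direction the paper actually writes out --- the lower bound on $\log f(x^{(n)}+\xi)-\log f(x^{(n)})$, coming from the upper bound $\|x^{(n)}+\xi\|_2^2\le\|x^{(n)}\|_2^2+n\eta^2+2\eta\|x^{(n)}\|_1$ --- is handled identically and is correct.

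Where you go beyond the paper is your closing paragraph, and the issue you flag there is real: the paper's ``with similar proof'' for the other direction is not symmetric, because $\min_{\|\xi\|_\infty\le\eta}\|x+\xi\|_2^2=\sum_{i:|x_i|>\eta}(|x_i|-\eta)^2$, which is strictly smaller than $\|x\|_2^2+n\eta^2-2\eta\|x\|_1=\sum_i(|x_i|-\eta)^2$ as soon as any coordinate has $|x_i|<\eta$. However, your proposed rescue is too optimistic, and your probability bookkeeping is inverted: a coordinate \emph{satisfies} $|x_i|>\eta$ with probability $2\Phi(-\eta/\sigma)<1$ (it \emph{fails} with probability $2\Phi(\eta/\sigma)-1\approx\sqrt{2/\pi}\,\eta/\sigma$), so the event that all $n$ coordinates satisfy it has probability $(2\Phi(-\eta/\sigma))^n\to 0$. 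For large $n$ the exceptional coordinates are present almost surely, roughly $\sqrt{2/\pi}\,(\eta/\sigma)n$ of them, and the resulting defect $\sum_{|x_i|\le\eta}(|x_i|-\eta)^2$ grows linearly in $n$ (of order $n\eta^3/\sigma$ by the WLLN). It therefore cannot be ``merged with $\epsilon$''; it is only negligible relative to the $n\eta^2$ term, so the upper-bound half of the lemma as literally stated needs either an $O(n\eta^3/\sigma)$ correction or a restatement up to lower-order terms. A cleaner repair is to note that the downstream result, Theorem \ref{th:B_inf}, only requires the weaker and unconditionally valid estimate $\|x+\xi\|_2^2-\|x\|_2^2\ge-2\eta\|x\|_1$ (i.e., drop the $n\eta^2$ from the numerator on the right-hand side), since $B_\infty$ is taken from the other endpoint anyway.
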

\begin{proof}
	By Lemma \ref{lemma:bound_inf},
	\begin{align}
		\|x^{(n)}+\xi\|^2_2 &\leq \|x^{(n)}+\text{sign}(x^{(n)})\eta \|^2_2 \notag \\
		&= \|x^{(n)}\|^2_2 + n \eta^2 + 2 \|x^{(n)}\|_1 \eta \notag
	\end{align}
	The log-p.d.f. of $x^{(n)}+\xi$ is 
	\begin{align}
		\log f(x^{(n)}+\xi) = \log (A e^{-\frac{\|x^{(n)}+\xi\|_2^2}{2\sigma^2}})\notag
	\end{align}
	where $A = (2\pi\sigma^2)^{-\frac{n}{2}}$. The lower bound of the log-p.d.f. of $x^{(n)}+\xi$ is 
	\begin{align}
		\log f(x^{(n)}+\xi) &= \log (A e^{-\frac{\|x^{(n)}+\xi\|^2_2}{2\sigma^2}})\notag \\
		&\geq \log (A e^{-\frac{\|x^{(n)}\|^2_2 + n\eta^2 + 2 \eta\|x^{(n)}\|_1}{2\sigma^2}})\notag \\
		& = \log (A e^{-\frac{\|x^{(n)}\|^2_2}{2\sigma^2}}e^{-\frac{n\eta^2 + 2 \eta\|x^{(n)}\|_1}{2\sigma^2}})\notag \\
		&= \log f(x^{(n)}) -\frac{n\eta^2 }{2\sigma^2} -\frac{2 \eta\|x^{(n)}\|_1}{2\sigma^2}\notag
	\end{align}
	By Lemma \ref{thm:x1_bound}, $\forall \epsilon>0$ when $n$ is sufficient large, we have $\|x^{(n)}\|_1 < n\sigma\sqrt{\frac{2}{\pi}}+\epsilon$. Thus,
	\begin{align}
		\log f(x^{(n)}+\xi) - \log f(x^{(n)}) > -\frac{n\eta^2 + 2 \eta(n\sigma\sqrt{\frac{2}{\pi}}+\epsilon)}{2\sigma^2} \notag
	\end{align}
	With similar proof, we can obtain the upper bound as
	\begin{align}
		\log f(x^{(n)}+\xi) - \log f(x^{(n)}) < -\frac{n\eta^2 - 2 \eta(n\sigma\sqrt{\frac{2}{\pi}}+\epsilon)}{2\sigma^2} \notag
	\end{align}
\end{proof}

\begin{theorem}\label{th:B_inf}
	Let $x^{(n)}$ be a sample drawn from $X^{(n)}\sim\mathcal{N}(\mathbf{0},\sigma^2\mathbf{I}_n)$. A perturbation $\xi\in\mathbb{R}^n$, $\|\xi\|_\infty\leq \eta$ is added to $x^{(n)}$. When $n$ is sufficient large, $\forall \delta>0$, $\forall \epsilon>0$,  with probability at least $1-\delta$,  ($x^{(n)}+\xi$) belongs to the typical set $\mathcal{A}_{B_\infty}^{(n)}$, where
	\begin{align}
		B_\infty=\frac{\eta^2}{2\sigma^2}+\sqrt{\frac{2}{\pi}}\frac{\eta}{\sigma}+(1+\frac{1}{2n\sigma^2})\epsilon \notag
	\end{align}
\end{theorem}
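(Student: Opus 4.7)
The plan is to mirror the argument of Theorem \ref{th:B}, substituting the $L_\infty$ density-shift estimate (Lemma \ref{lemma:pdf_diff_infty}) for its $L_2$ counterpart. The three ingredients are (i) a deterministic two-sided bound on how much the log-density changes under $\xi$, (ii) the typical-set probability bound for the unperturbed Gaussian sample from Theorem \ref{thm:ts}(1), and (iii) a triangle-inequality combination certifying typicality of $x^{(n)}+\xi$.

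First, I would divide the bound of Lemma \ref{lemma:pdf_diff_infty} by $-n$, which converts the log-density shift into a shift of the sample entropy-rate estimator: the quantity $-\tfrac{1}{n}\log f(x^{(n)}+\xi) + \tfrac{1}{n}\log f(x^{(n)})$ lies in an interval centered at $\tfrac{\eta^{2}}{2\sigma^{2}}$ of half-width $\tfrac{\eta}{\sigma}\sqrt{2/\pi}+\tfrac{\eta\epsilon}{n\sigma^{2}}$. This step is purely algebraic; its probabilistic content is inherited from the $L_1$-norm event $\|x^{(n)}\|_1<n\sigma\sqrt{2/\pi}+\epsilon$ used inside Lemma \ref{lemma:pdf_diff_infty} (which holds with probability tending to one by Lemma \ref{thm:x1_bound}).

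Next, Theorem \ref{thm:ts}(1) applied to the Gaussian density tells us that, for any $\delta>0$ and all sufficiently large $n$, the event $\bigl|-\tfrac{1}{n}\log f(x^{(n)}) - h(X)\bigr|<\epsilon$ holds with probability at least $1-\delta$. Because both this event and the $L_1$ event just mentioned are WLLN-type conclusions on the same sequence, they can be arranged to occur simultaneously on a single event of probability at least $1-\delta$ by enlarging $n$. On this event I would apply the triangle inequality to the decomposition
\[
-\tfrac{1}{n}\log f(x^{(n)}+\xi) - h(X)=\bigl(-\tfrac{1}{n}\log f(x^{(n)}) - h(X)\bigr)+\bigl(-\tfrac{1}{n}\log f(x^{(n)}+\xi)+\tfrac{1}{n}\log f(x^{(n)})\bigr),
\]
which yields a two-sided bound whose upper endpoint is essentially $\epsilon+\tfrac{\eta^{2}}{2\sigma^{2}}+\tfrac{\eta}{\sigma}\sqrt{2/\pi}+\tfrac{\eta\epsilon}{n\sigma^{2}}$ and whose lower endpoint is the corresponding expression with the $\epsilon$- and $\sqrt{2/\pi}$-terms subtracted. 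Exactly as in the proof of Theorem \ref{th:B}, the two endpoints are asymmetric about zero with the upper one dominating in absolute value, so one may set $B_\infty$ equal to this dominating upper endpoint to obtain $(x^{(n)}+\xi)\in\mathcal{A}_{B_\infty}^{(n)}$ with probability at least $1-\delta$.

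The main obstacle is the bookkeeping at the final step: packaging the accumulated $\epsilon$-contributions (one from the original typical set and one from the $L_1$ concentration) into the compact form $(1+\tfrac{1}{2n\sigma^{2}})\epsilon$ of the theorem requires a careful choice of how the free $\epsilon$'s in the two applications of WLLN are rescaled and identified. Conceptually, this is what distinguishes $B_\infty$, which scales like $\tfrac{\eta^{2}}{2\sigma^{2}}+\tfrac{\eta}{\sigma}\sqrt{2/\pi}+O(\epsilon)$, from the much tighter $B_2$ of Theorem \ref{th:B}: a perturbation of fixed $L_\infty$ size has $L_2$ energy of order $\sqrt{n}\eta$, so even a tiny $\eta$ pushes $x^{(n)}+\xi$ into a substantially larger typical set. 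Beyond this accounting, the proof is a line-by-line $L_\infty$ analogue of Theorem \ref{th:B}.
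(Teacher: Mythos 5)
Your proposal is correct and follows essentially the same route as the paper: apply Lemma \ref{lemma:pdf_diff_infty} to bound the shift in $-\tfrac{1}{n}\log f$, invoke Theorem \ref{thm:ts}(1) for the typicality of $x^{(n)}$, combine the two via the triangle inequality, and take the dominating (upper) endpoint as $B_\infty$. Your explicit handling of the joint occurrence of the typical-set event and the $L_1$-concentration event, and your observation that the raw upper endpoint carries $\tfrac{\eta\epsilon}{n\sigma^2}$ rather than the stated $\tfrac{\epsilon}{2n\sigma^2}$, are if anything slightly more careful than the paper's own write-up.
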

\begin{proof}
	By Lemma \ref{lemma:pdf_diff_infty}, we have
	\begin{align}
		-\frac{1}{n}\log f(x^{(n)}+\xi) - h(X) &< -\frac{1}{n}\log f(x^{(n)}) - h(X) + \frac{n\eta^2 + 2 \eta(n\sigma\sqrt{\frac{2}{\pi}}+\epsilon)}{2n\sigma^2} \notag \\
		-\frac{1}{n}\log f(x^{(n)}+\xi) - h(X) &> -\frac{1}{n}\log f(x^{(n)}) - h(X) + \frac{n\eta^2 - 2 \eta(n\sigma\sqrt{\frac{2}{\pi}}+\epsilon)}{2n\sigma^2} \notag
	\end{align}
	According to Theorem \ref{thm:ts}, $\forall \epsilon>0$, $\forall \delta>0$, $\exists n_0$ s.t. $\forall n \geq n_0$ there exists
	\begin{align}
		\mathbb{P}\left\{ \left|-\frac{1}{n}\log f(x^{(n)}) - h(X) \right| < \epsilon \right\} > 1- \delta\notag
	\end{align}
	Thus,
	\begin{align}
		\mathbb{P}\left\{ \left(-\frac{1}{n}\log f(x^{(n)}+\xi) - h(X) \right) \in \Omega \right\} > 1- \delta\notag
	\end{align}
	where
	\begin{align}
		\Omega = (\frac{n\eta^2 - 2 \eta(n\sigma\sqrt{\frac{2}{\pi}}+\epsilon)}{2n\sigma^2}-\epsilon ,\frac{n\eta^2 + 2 \eta(n\sigma\sqrt{\frac{2}{\pi}}+\epsilon)}{2n\sigma^2}+\epsilon) \notag
	\end{align}
	Since
	\begin{align}
		\left|\frac{n\eta^2 - 2 \eta(n\sigma\sqrt{\frac{2}{\pi}}+\epsilon)}{2n\sigma^2}-\epsilon\right| < \left|\frac{n\eta^2 + 2 \eta(n\sigma\sqrt{\frac{2}{\pi}}+\epsilon)}{2n\sigma^2}+\epsilon\right| \notag
	\end{align}
	Taking
	\begin{align}
		B_\infty=\frac{\eta^2}{2\sigma^2}+\sqrt{\frac{2}{\pi}}\frac{\eta}{\sigma}+(1+\frac{1}{2n\sigma^2})\epsilon \notag
	\end{align}
	we have
	\begin{align}
		\mathbb{P}\left\{ \left|-\frac{1}{n}\log f(x^{(n)}+\xi) - h(X) \right| <B_\infty \right\} > 1- \delta\notag
	\end{align}
	Or equivalently expressed by the typical set
	\begin{align}
		\mathbb{P}\left\{(x^{(n)}+\xi) \in \mathcal{A}_{B_\infty}^{(n)}\right\} > 1- \delta \notag
	\end{align}
\end{proof}
Comparing to $\epsilon$ and $B_{2}$, $B_\infty$ is much larger.

Above we explored bounds of the perturbed Gaussian noise using typical sets. Next, we explore the volumes of the typical sets to explicitly calculate the size of the typical sets. 

\subsubsection{Volumes of the Typical Sets}
By Theorem \ref{thm:ts}, the volumes of typical sets are bounded and small. We will calculate the volumes of typical sets for the original artificial Gaussian noise, the perturbed Gaussian noise constrained in $L_2$ norm, and the perturbed Gaussian noise constrained in $L_p$ norm.
\begin{proposition}\label{thm:vol_typical}
	For the i.i.d. random variable sequence $X^{(n)}\sim\mathcal{N}(\mathbf{0},\sigma^2\mathbf{I}_n)$, the volume of its typical set  satisfies
	\begin{align}
		(1-\epsilon)(2^{-\epsilon} \sqrt{2\pi e} \sigma )^n \leq \text{Vol}(A_\epsilon^{(n)}(X)) \leq (2^\epsilon \sqrt{2\pi e} \sigma )^n \notag
	\end{align}
	when $n$ is sufficient large.
\end{proposition}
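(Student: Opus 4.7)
The plan is to specialize the general typical-set volume bounds of Theorem \ref{thm:ts} to the Gaussian case by substituting the closed-form differential entropy provided by Lemma \ref{lemma:df_gaussian}. Since $X^{(n)}$ has i.i.d. coordinates distributed as $X\sim\mathcal{N}(0,\sigma^2)$, the relevant one-dimensional differential entropy is $h(X)=\log(\sqrt{2\pi e}\,\sigma)$, so the quantity $2^{n h(X)}$ collapses to $(\sqrt{2\pi e}\,\sigma)^n$.

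First I would invoke part (2) of Theorem \ref{thm:ts}, which gives $\mathrm{Vol}(A_\epsilon^{(n)}(X))\leq 2^{n(h(X)+\epsilon)}$ for every $n$. Using the identity $2^{n\log a}=a^n$ with $a=\sqrt{2\pi e}\,\sigma$, I would rewrite $2^{n(h(X)+\epsilon)}=2^{n\log(\sqrt{2\pi e}\,\sigma)}\cdot 2^{n\epsilon}=(\sqrt{2\pi e}\,\sigma)^n\cdot(2^\epsilon)^n=(2^\epsilon\sqrt{2\pi e}\,\sigma)^n$, which is exactly the desired upper bound. Next I would invoke part (3) of Theorem \ref{thm:ts}, which for $n$ sufficiently large provides $\mathrm{Vol}(A_\epsilon^{(n)}(X))\geq (1-\epsilon)\,2^{n(h(X)-\epsilon)}$. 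The same algebraic manipulation with $-\epsilon$ in place of $+\epsilon$ yields $(1-\epsilon)(\sqrt{2\pi e}\,\sigma)^n\cdot(2^{-\epsilon})^n=(1-\epsilon)(2^{-\epsilon}\sqrt{2\pi e}\,\sigma)^n$, matching the stated lower bound.

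There is no genuine obstacle in this proposition; the content is essentially bookkeeping. The only thing to be mildly careful about is the use of $\log=\log_2$ consistently (as declared with the definition of entropy earlier in the paper), so that $2^{n\log(\sqrt{2\pi e}\,\sigma)}=(\sqrt{2\pi e}\,\sigma)^n$ holds on the nose. The ``sufficient large $n$'' qualifier is inherited from Theorem \ref{thm:ts}(3) and needs no extra work. The main conceptual point worth emphasizing in the writeup is interpretive rather than technical: the bound $(2^\epsilon\sqrt{2\pi e}\,\sigma)^n$ grows only like a constant per dimension, so the typical set is geometrically a very thin shell in $\mathbb{R}^n$ compared to the full input space, which is precisely the observation that motivates the subsequent discussion of adversarial deviations.
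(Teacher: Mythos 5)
Your proposal is correct and follows exactly the paper's own argument: apply parts (2) and (3) of Theorem \ref{thm:ts}, substitute $h(X)=\log(\sqrt{2\pi e}\,\sigma)$ from Lemma \ref{lemma:df_gaussian}, and rewrite $2^{n(h(X)\pm\epsilon)}$ as $(2^{\pm\epsilon}\sqrt{2\pi e}\,\sigma)^n$. No differences worth noting.
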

\begin{proof}
	By Theorem \ref{thm:ts},
	\begin{align}
		\text{Vol}(A_\epsilon^{(n)}(X)) &\leq 2^{n(\log (\sqrt{2\pi e} \sigma)+\epsilon)} \notag \\
		&= (2^\epsilon \sqrt{2\pi e} \sigma )^n\notag
	\end{align}
	When $n$ is sufficient large,
	\begin{align}
		\text{Vol}(A_\epsilon^{(n)}(X)) &\geq (1-\epsilon) 2^{n(\log (\sqrt{2\pi e} \sigma)-\epsilon)} \notag \\
		&= (1-\epsilon)(2^{-\epsilon} \sqrt{2\pi e} \sigma )^n \notag
	\end{align}
\end{proof}
Since the bound holds for arbitrary $\epsilon >0$, when $\epsilon$ is small the volume is approximately $\text{Vol}(A_\epsilon^{(n)}(X)) \approx (\sqrt{2\pi e} \sigma )^n$. The volume is approximately equivalent to the volume of a hyper-cubic with edge length of $\sqrt{2\pi e} \sigma$. Compared to the sampling space, the volume of the typical set is finite and small. By Proposition \ref{thm:vol_typical}, $\forall \delta>0$ with probability at least $1-\delta$, we only sample in a tiny subspace of the entire sample space $\mathbb{R}^n$ during the training process of the deep denoising model.

\begin{proposition}\label{thm:vol_typical_adv}
	The volume of the typical set for the perturbed Gaussian noise constrained by the $L_2$ norm has the upper bound
\begin{equation}
\text{Vol}(A_{B_2}^{(n)}(X)) \leq (2^{(\frac{\eta^2+2\eta\sqrt{n\sigma^2(1+2\epsilon)}}{2n\sigma^2}+\epsilon)}\sqrt{2\pi e} \sigma)^n\notag
\end{equation}
\end{proposition}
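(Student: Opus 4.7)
The plan is to reduce the statement to a direct application of Theorem \ref{thm:ts}(2), exactly mirroring how Proposition \ref{thm:vol_typical} was handled, but with the enlarged parameter $B_2$ in place of $\epsilon$. The key observation is that the set $A_{B_2}^{(n)}(X)$ appearing in Theorem \ref{th:B} is itself a typical set in the sense of Definition \ref{def:ts} for the same underlying Gaussian distribution $X\sim \mathcal{N}(0,\sigma^2)$; only the tolerance parameter has changed from $\epsilon$ to $B_2$. Therefore the upper bound of Theorem \ref{thm:ts}(2), which holds for every $n$ and any positive tolerance, applies verbatim.

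First I would apply Theorem \ref{thm:ts}(2) with tolerance $B_2$ to get
\begin{equation}
\text{Vol}(A_{B_2}^{(n)}(X)) \leq 2^{n(h(X)+B_2)}. \notag
\end{equation}
Second, I would invoke Lemma \ref{lemma:df_gaussian} to replace the differential entropy by $h(X)=\log(\sqrt{2\pi e}\,\sigma)$, which lets me rewrite the right-hand side as $(2^{B_2}\sqrt{2\pi e}\,\sigma)^n$. Finally, I would substitute the explicit value
\begin{equation}
B_2 = \frac{\eta^2 + 2\eta\sqrt{n\sigma^2(1+2\epsilon)}}{2n\sigma^2}+\epsilon \notag
\end{equation}
from Theorem \ref{th:B} into the exponent of $2$, producing exactly the claimed bound $(2^{(\frac{\eta^2+2\eta\sqrt{n\sigma^2(1+2\epsilon)}}{2n\sigma^2}+\epsilon)}\sqrt{2\pi e}\,\sigma)^n$.

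There is essentially no obstacle in this argument: it is a routine chain of substitutions once one recognises that $A_{B_2}^{(n)}(X)$ is a typical set of the same type whose volume upper bound in Theorem \ref{thm:ts}(2) is uniform in the tolerance parameter. The only point worth flagging is that the upper bound portion of Theorem \ref{thm:ts}(2) holds for all $n$ (unlike the lower bound, which requires $n$ sufficiently large), so no additional largeness assumption on $n$ is needed for this proposition beyond whatever was already required in Theorem \ref{th:B} to define $B_2$ meaningfully.
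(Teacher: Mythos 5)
Your proposal is correct and matches the paper's intended argument: the paper's own proof simply states that the result is immediate from Theorem \ref{thm:ts} and Theorem \ref{th:B}, and your chain of substitutions (Theorem \ref{thm:ts}(2) with tolerance $B_2$, then $h(X)=\log(\sqrt{2\pi e}\,\sigma)$, then the explicit value of $B_2$) is exactly the unpacking of that one-line proof. Your remark that the volume upper bound holds for all $n$ is a correct and slightly more careful observation than the paper makes explicit.
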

\begin{proof}
The proof is trivial by Theorem \ref{thm:ts} and Theorem \ref{th:B}.
\end{proof}
When $\epsilon$ is small, the upper bound is approximately equals to
\begin{align}
\text{Vol}(A_{B_2}^{(n)}(X)) &\leq (2^{\frac{\eta^2+2\eta\sqrt{n\sigma^2}}{2n\sigma^2}}\sqrt{2\pi e} \sigma )^n \notag \\
&= (2^{(\frac{\eta^2}{2\sigma^2}n^{-1})}2^{(\frac{\eta}{\sigma}n^{-\frac{1}{2}})}\sqrt{2\pi e} \sigma)^n \notag
\end{align}
which is the volume of a hyper-cubic of length $2^{(\frac{\eta^2}{2\sigma^2}n^{-1})}2^{(\frac{\eta}{\sigma}n^{-\frac{1}{2}})}\sqrt{2\pi e} \sigma$. As we mentioned before, $2^{(\frac{\eta^2}{2\sigma^2}n^{-1})}2^{(\frac{\eta}{\sigma}n^{-\frac{1}{2}})}$ decreases to $1$ exponentially as $n$ increases. Thus, the volume of the typical set for the perturbed Gaussian noise constrained by the $L_2$ norm is slightly larger than the volume of the typical set for the original artificial Gaussian noise.

\begin{proposition}\label{thm:vol_typical_adv_inf}
	The volume of the typical set for the perturbed Gaussian noise constrained by the $L_\infty$ norm has the upper bound
\begin{equation}
\text{Vol}(A_{B_\infty}^{(n)}(X)) \leq (2^{(\frac{\eta^2}{2\sigma^2}+\sqrt{\frac{2}{\pi}}\frac{\eta}{\sigma}+\frac{2n\sigma^2+1}{2n\sigma^2}\epsilon)}\sqrt{2\pi e} \sigma)^n\notag
\end{equation}
\end{proposition}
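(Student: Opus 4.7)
The plan is to mirror exactly the strategy used in Proposition \ref{thm:vol_typical_adv}, since the result for the $L_\infty$ case is structurally the same as the $L_2$ case but with a different typical set parameter. The only nontrivial work was done in Theorem \ref{th:B_inf}, which already established that perturbed samples lie in $A_{B_\infty}^{(n)}(X)$ with high probability; here we merely translate that membership into a volume bound.

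First, I would apply property (2) of Theorem \ref{thm:ts} directly to the typical set $A_{B_\infty}^{(n)}(X)$, treating $B_\infty$ as the $\epsilon$-parameter in the definition of the typical set. This immediately gives
\begin{equation}
\text{Vol}(A_{B_\infty}^{(n)}(X)) \leq 2^{n(h(X) + B_\infty)}. \notag
\end{equation}
Next, I would substitute the Gaussian differential entropy $h(X) = \log(\sqrt{2\pi e}\,\sigma)$ from Lemma \ref{lemma:df_gaussian}, which lets me factor the right-hand side as $(2^{B_\infty}\sqrt{2\pi e}\,\sigma)^n$.

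Finally, I would substitute the expression for $B_\infty$ from Theorem \ref{th:B_inf}, namely
\begin{equation}
B_\infty = \frac{\eta^2}{2\sigma^2} + \sqrt{\frac{2}{\pi}}\frac{\eta}{\sigma} + \left(1+\frac{1}{2n\sigma^2}\right)\epsilon, \notag
\end{equation}
and rewrite $1 + \frac{1}{2n\sigma^2} = \frac{2n\sigma^2 + 1}{2n\sigma^2}$ to match the form stated in the proposition.

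There is no genuine obstacle in this proof: once Theorem \ref{th:B_inf} is in hand, the result is an immediate corollary of the second property of typical sets, and only an algebraic rearrangement remains. The only thing worth flagging is that the bound is meaningful only in the regime where Theorem \ref{th:B_inf} applies (sufficiently large $n$), and the exponent $B_\infty$ is noticeably larger than in the $L_2$ case, reflecting the fact that the $L_\infty$-ball is much less restrictive than the $L_2$-ball of comparable radius — a point that could be worth emphasizing in a remark after the proof.
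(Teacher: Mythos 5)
Your proposal is correct and follows exactly the route the paper takes: the paper's proof simply cites Theorem \ref{thm:ts} and Theorem \ref{th:B_inf}, and your expansion (applying property (2) of Theorem \ref{thm:ts} with $B_\infty$ as the typicality parameter, substituting $h(X)=\log(\sqrt{2\pi e}\,\sigma)$, and rewriting the exponent) is precisely the intended argument. No gaps.
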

\begin{proof}
The proof is trivial by Theorem \ref{thm:ts} and Theorem \ref{th:B_inf}.
\end{proof}
When $\epsilon$ is small, the upper bound is approximately equals to
\begin{align}
\text{Vol}(A_{B_\infty}^{(n)}(X)) &\leq (2^{(\frac{\eta^2}{2\sigma^2}+\sqrt{\frac{2}{\pi}}\frac{\eta}{\sigma})}\sqrt{2\pi e} \sigma)^n\notag\\
&= (2^{\frac{\eta^2}{2\sigma^2}}2^{\sqrt{\frac{2}{\pi}}\frac{\eta}{\sigma}}\sqrt{2\pi e} \sigma)^n\notag
\end{align}
which is the volume of a hyper-cubic of length $2^{\frac{\eta^2}{2\sigma^2}}2^{\sqrt{\frac{2}{\pi}}\frac{\eta}{\sigma}}\sqrt{2\pi e} \sigma$. In fact, to ensure the perturbation is non-perceivable under $L_\infty$ constrain, $\eta \ll \sigma$. Thus,  $2^{\frac{\eta^2}{2\sigma^2}}2^{\sqrt{\frac{2}{\pi}}\frac{\eta}{\sigma}}\approx 1$ but still larger than $2^{(\frac{\eta^2}{2\sigma^2}n^{-1})}2^{(\frac{\eta}{\sigma}n^{-\frac{1}{2}})}$ since $n>1$.

Proposition \ref{thm:vol_typical_adv} and Proposition \ref{thm:vol_typical_adv_inf}  provide bounds on the volumes of the typical sets under $L_2$ and $L_\infty$ constrained perturbations. The volumes have inequality 
\begin{align}
\text{Vol}(A_\epsilon^{(n)}(X)) \leq \text{Vol}(A_{B_2}^{(n)}(X)) \leq \text{Vol}(A_{B_\infty}^{(n)}(X))
\end{align}
and are finite in the $\mathbb{R}^n$ space.

\subsubsection{Summary of the Theoretical Explanation}

\begin{itemize}
\item Due to the high dimensionality, the artificial Gaussian noises sampled during the training process belong to the typical set with the probability approximately equal to 1. Specifically, the typical set is a sphere shell in $\mathbb{R}^n$ with radius $\sqrt{n\sigma^2}$ (Theorem \ref{thm:x_bound}).
\item The volume of the typical set is finite and small which is approximately equals to the volume of a hyper-cubic with edge length of $\sqrt{2\pi e}\sigma$ in $\mathbb{R}^n$ (Proposition \ref{thm:vol_typical}).
\item Since deep image denoising models only learn from the typical set during training, they never encounter perturbations that deviate samples from this set. Thus, the models have difficulties to generalize on the adversarial samples.
\item Specifically, when the perturbation aligns with the direction of the Gaussian noise, it deviates the sample from the typical set the most (Theorem \ref{th:B}). This equality condition coincide with the \textit{Observation 2} (parallel to Gaussian).
\item The model's performance under perturbations is related to the direction of the Gaussian noise rather than the model structure or image content, which explains the model similarity in the neighborhoods of all samples. It is also consistent with the \textit{Observation 1} (connected set) and \textit{Observation 3} (content free).
\item Although the perturbations deviate samples out of the typical set, the perturbed adversarial samples still remain within a slightly larger typical set if the perturbation is constrained by $L_2$ or $L_\infty$ norm (Theorem \ref{th:B}, Proposition \ref{thm:vol_typical_adv}, Theorem \ref{th:B_inf}, and Proposition \ref{thm:vol_typical_adv_inf}).
\item Thus, if we sample from the larger typical set instead of the original typical sets, the model will be capable of defending against the perturbations. Since the volumes of the typical sets are finite, collecting the samples is not impossible (Proposition \ref{thm:vol_typical_adv} and Proposition \ref{thm:vol_typical_adv_inf}).
\end{itemize}

\begin{figure}[t!]
	\centering
	\includegraphics[width=.7\textwidth]{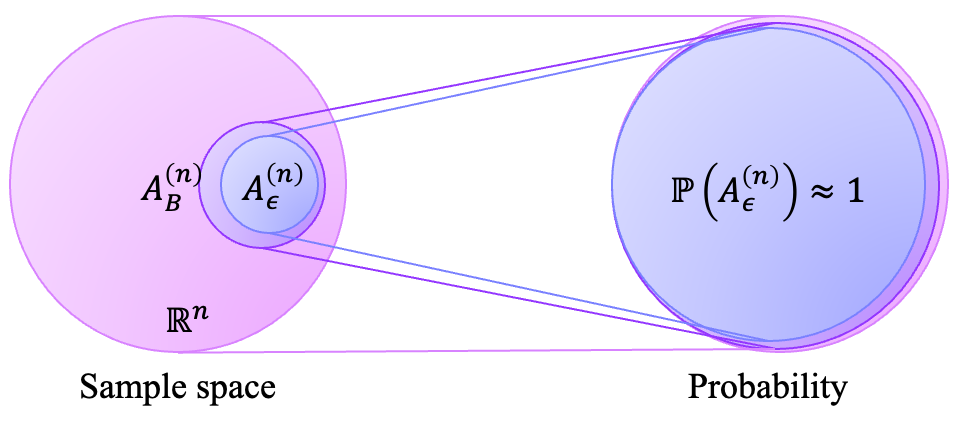}
	\caption{The perturbed samples deviate from the original typical set of the Gaussian noise, but they still remain within a slightly larger typical set. The probabilities of the typical sets tend to one as $n$ increase.}
	\label{typical2}
\end{figure}

\subsection{Adversarial Defense}
As discussed in the previous subsection, the high adversarial transferability is due to the i.i.d. artificial Gaussian noise added during the training process. Due to the curse of dimensionality, the Gaussian noises are sampled from only a small typical set. Adversarial perturbations cause samples to deviate from this typical set, leading to model failure. Since all samples share similar adversarial attack regions, models exhibit high adversarial transferability. Moreover, the sphere-shell topology of the typical set makes the model vulnerable to adversarial attacks. Therefore, to improve the model’s robustness, expanding the sampling region should be a viable approach.

According to Theorem \ref{th:B} and \ref{th:B_inf}, the perturbed samples still remain within a slightly larger typical set. Therefore, augmenting the sampling in this larger typical set, rather than simply sampling within the smaller one, should be beneficial for defending against adversarial attacks. Furthermore, as shown in Theorem \ref{th:B}, perturbations with directions similar to that of the Gaussian noise are adversarial. Hence, the augmented samples should focus on regions aligned with the direction of the Gaussian noise. However, as shown in Figure \ref{typical2}, the probability of the original typical set of the Gaussian noise approaches one as $n$ becomes large. The method of augmenting samples in $A_{B_2}^{(n)}$ or $A_{B_\infty}^{(n)}$, rather than in $A_{\epsilon}^{(n)}$, needs to be further studied.

Typical sets are defined based on the probability densities of the samples. Samples that belong to the larger typical set but not the smaller one are expected to have higher or lower probability densities. But the samples should still remain i.i.d. Gaussian distributed with high probability. When perturbations are in similar directions to the Gaussian noise, the probability densities of the perturbed samples decrease, as shown by Equation \ref{eq:pdf-adv}. Therefore, augmented samples should be collected from i.i.d. Gaussian distributions with lower probability densities. To this end, we propose out-of-distribution typical set sampling (TS) to generate samples within the typical set of the perturbed samples, rather than within the typical set of the Gaussian noise. Models trained using TS sampling can achieve a certain level of resistance to input perturbations.

\subsubsection{TS Sampling} \label{defense}

\begin{algorithm}[t!]\label{alg}
	\caption{TS Sampling}\label{alg:alg1}
	\begin{algorithmic}
		\STATE 
		\STATE {\textsc{INPUT:}}{ Gaussian noise sample $s$, Noise level $\sigma $,  Number of iterations $K$ } 
		\STATE {\textsc{OUTPUT:}}{ Augmented samples $s_{new}$} 
		\STATE
  
		\STATE $f_{new} \gets \text{log\_normal\_pdf}(s,\sigma)$
        \STATE $s_{new} \gets s$
		\FOR {$k = 1:K$} 
    		\STATE {$a \gets \mathcal{N}(\mathbf{0}, \sigma I)$} 
    		\STATE {$f_a \gets \text{log\_normal\_pdf}(a,\sigma)$}
    		\IF {$f_{new} > f_a$}
        		\STATE $s_{new} \gets a$
        		\STATE $f_{new} \gets f_a$
            \ENDIF
        \ENDFOR
        \RETURN $s_{new}$
        \STATE
        \STATE \textbf{function } {$\text{log\_normal\_pdf}(s,\sigma)$:}
        \STATE \hspace{1em} $f\gets \sum_i\log \left(\frac{1}{\sqrt{2\pi}\sigma} \exp{(-\frac{s_i^2}{2\sigma^2})}\right)$
        \STATE \hspace{1em} \textbf{return} {$f$}

	\end{algorithmic}
	\label{alg1}
\end{algorithm}

Based on the above experiments and theories, the effects of adversarial attacks depend solely on the Gaussian noise rather than the image content. Therefore, augmenting only the artificial Gaussian noise is sufficient to improve model robustness. Since both Gaussian noises and perturbed samples can be bound by typical sets, the augmented samples can be generated based on their probability densities. Specifically, the strong adversarial region is characterized by perturbations whose direction is similar to that of Gaussian noise. Consequently, the variance within this region is greater than that of the normally sampled region, resulting in an increase in entropy and a corresponding decrease in log-likelihood. The probability density of this region decreases as shown by Equation \ref{eq:pdf-adv}. In order for the sampling to cover this region, augmented samples should be collected from i.i.d. Gaussian distributions with lower probability densities.

TS sampling operates from the perspective of the probability density distribution, employing screening method to generate Gaussian noise with lower probability density, which is utilized as enhancement samples. As demonstrated in Algorithm 3.1, TS sampling takes the Gaussian noise sample $s\in\mathbb{R}^{H\times W}$ as input, where $H$ is the image height and $W$ is the image width. It then generates new Gaussian noise $a$ with progressively lower probability density to replace $s$. The number of iterations $K$ controls the extent to which the probability density is decreased. TS sampling is a simple algorithm for generating samples with lower probability densities while remaining i.i.d. and following a Gaussian distribution with high probability. The algorithm ensures that, at each iteration, the noise tensor goes towards a lower log-normal distribution probability density. This iterative approach expands the sampling space to the larger typical set after perturbation.

During the training of deep image denoising networks, normal Gaussian noise can be mixed with augmented samples generated through TS sampling to achieve higher robustness. We propose two mixing strategies, differentiated by their sampling ratios: TS-Preserve (TS-Pres.) and TS-Defense (TS-Def.). TS-Pres. aims to maintain or slightly improve the denoising performance of the model while also enhancing its robustness to a certain extent. TS-Def., on the other hand, significantly improves the model's robustness to the point where adversarial samples become nearly ineffective, albeit with a slight reduction in denoising performance. Specifically, TS-Pres. mixes Gaussian noise samples and TS sampling samples at a ratio of 2:1, while TS-Def. mixes them at a ratio of 1:1, utilizing more augmented samples to further enhance the model's robustness. The Gaussian noise samples and TS-Pres. samples are demonstrated in Figure \ref{fenbu}. This figure shows that TS sampling generates samples with lower probability densities. The line in the middle represents the negative value of the differential entropy, indicating the sphere-shell topology to which the samples converge as $n$ tends to infinity.

\begin{figure}[t!]\label{fenbu}
	\centering
	\includegraphics[width=.8\textwidth]{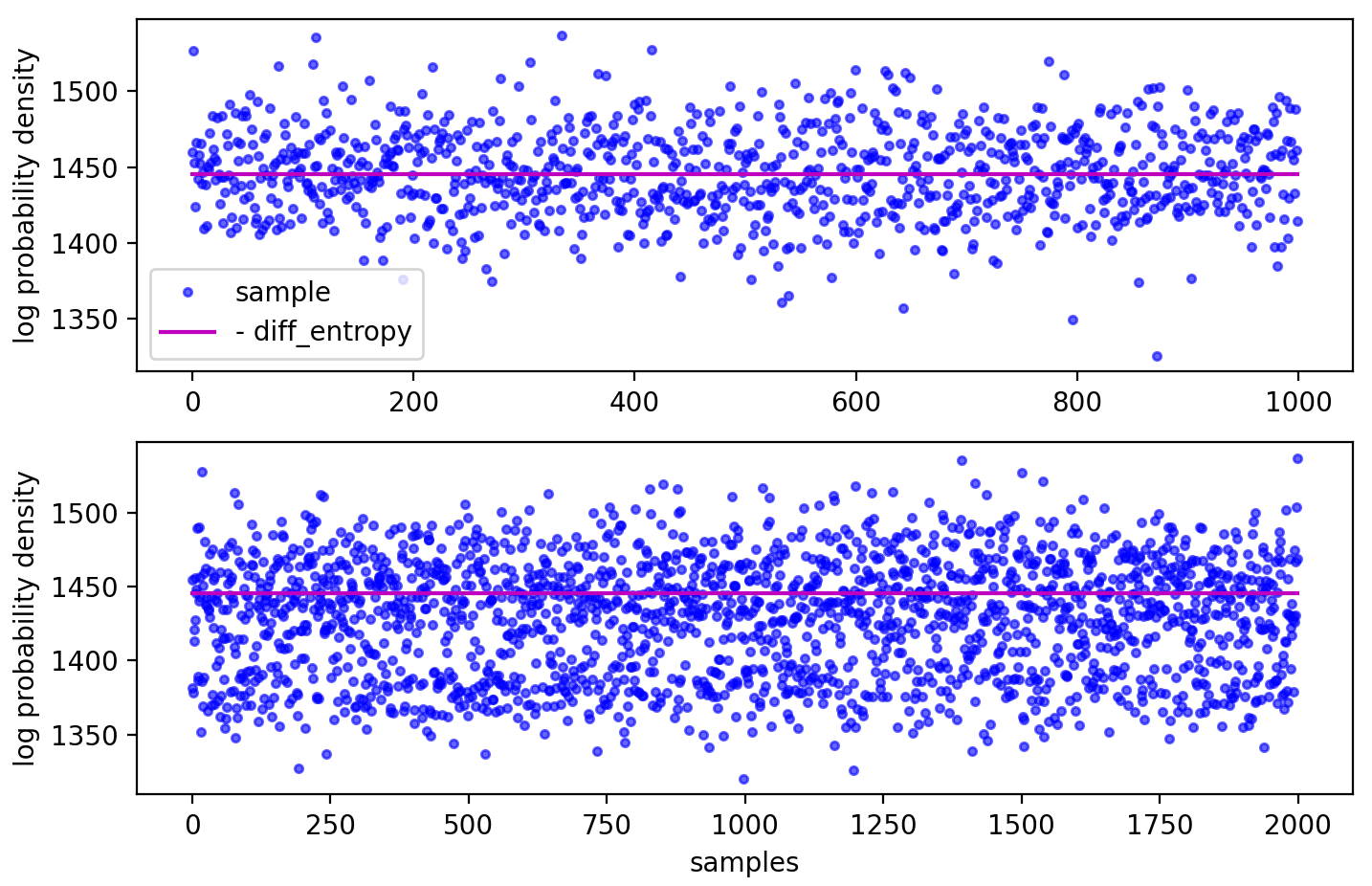}
	\caption{Normal deep image denoising model Gaussian noise sampling schematic (\emph{Top}). TS Gaussian noise sampling schematic (\emph{Down}).}
\end{figure}

\section{Experiments}

\subsection{Experiment Settings}
\subsubsection{Training Details}
During the training process, we followed the DnCNN setup using the CImageNet400 dataset, consisting of 400 grayscale images sized 180 × 180. Subsequently, during testing, we utilized the classical image denoising test dataset Set12 for model testing and comparisons. To test adversarial robustness, we made Set12-Adv-$L_2$ and Set12-Adv-$L_\infty$ datasets on Set12 for DnCNN models using L2-Denoising-PGD and Denoising-PGD \cite{ning2023evaluating}. L2-Denoising-PGD and Denoising-PGD are using the same parameters that the noise level is set to $25/225$, the maximum perturbation $\epsilon$ is set to $3/225$, and the number of iterative steps is set to $5$, and the perturbation step size $\alpha$ is set to $2/225$.

\subsubsection{Evaluation Metrics}
To evaluate model performance, we utilize peak signal-to-noise ratio (PSNR) and structural similarity (SSIM) \cite{ZHOU2019102655}:

\begin{align}
	\operatorname{PSNR}(\mathbf{X}, \mathbf{Y}) & =20 \cdot \log _{10}\left(\frac{\operatorname{MAX}_{\mathrm{I}}}{\operatorname{MSE}(\mathbf{X}, \mathbf{Y})}\right) \\
	\operatorname{SSIM}(\mathbf{X}, \mathbf{Y}) & =\frac{\left(2 \mu_{x} \mu_{y}+c_{1}\right)\left(2 \sigma_{x y}+c_{2}\right)}{\left(\mu_{x}^{2}+\mu_{y}^{2}+c_{1}\right)\left(\sigma_{x}^{2}+\sigma_{y}^{2}+c_{2}\right)}
\end{align}
where $X$ and $Y$ are the two images being used for comparison. $\mu _x$, $\mu _y$, $\sigma _{x}^{2} $, $\sigma _{y}^{2} $ are corresponding mean variance values, and $\sigma _{xy}$ is the covariance. $MAX_I$ is the maximum intensity, which is usually 255 in 8-bit representation.

\subsection{Performance of TS Sampling}

\begin{figure}[t!]\label{show1}
	\centering
	\includegraphics[width=\textwidth]{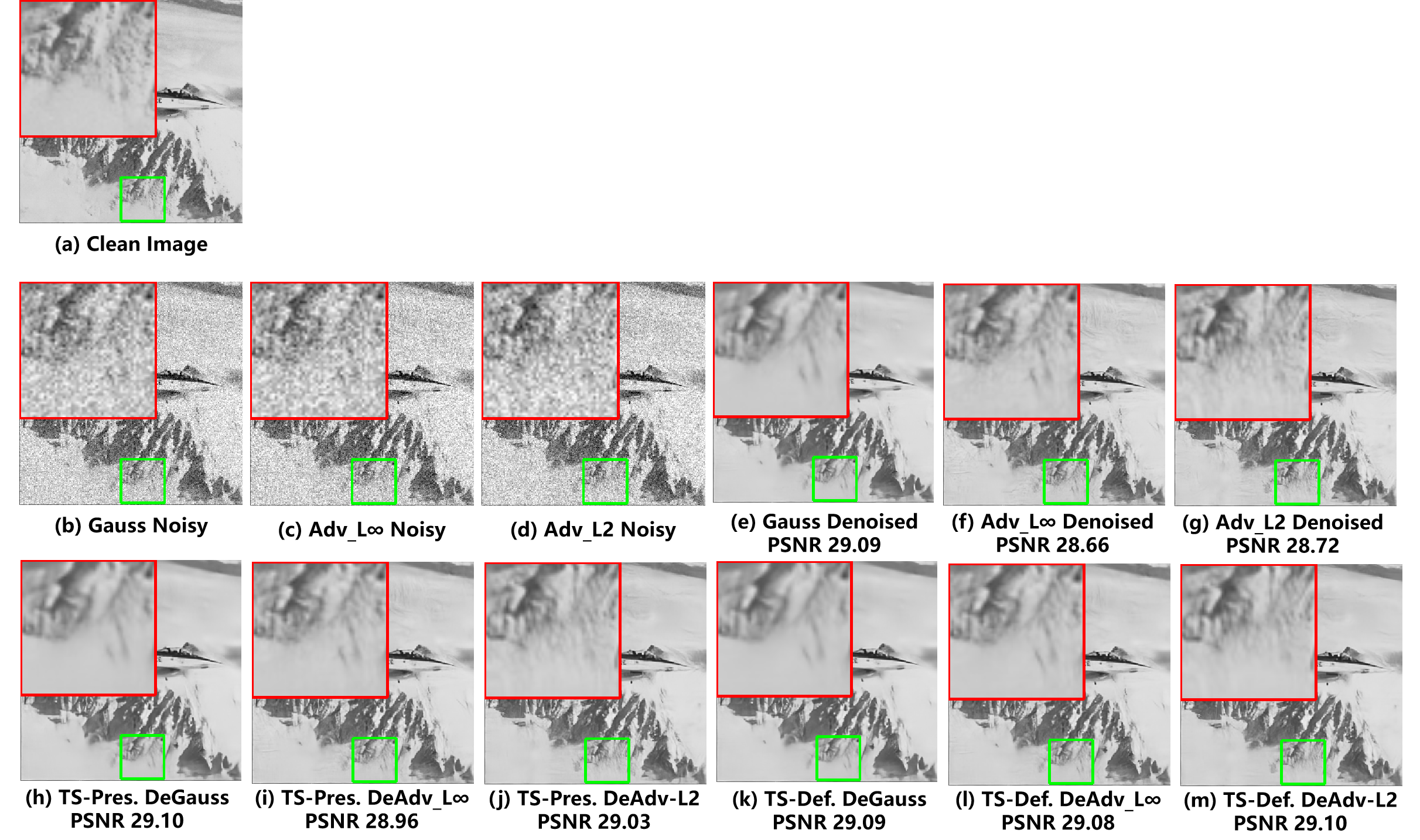}
	\caption{Image denoising results from different training strategies on the 'Airplane' image from the Set12 dataset. The image is corrupted by Gaussian noise, adversarial noise under $L_\infty$ constraint, and adversarial noise under $L2$ constraint, all with a noise level of 25.}
\end{figure}

\begin{figure}[t!]\label{show2}
	\centering
	\includegraphics[width=.7\textwidth]{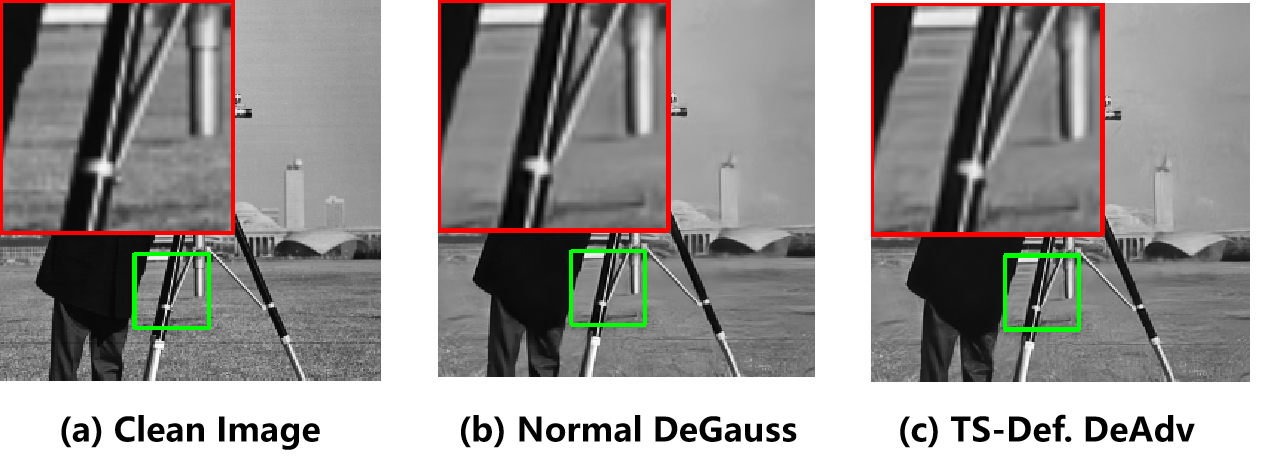}
	\caption{Visual results of adversarial denoising results after TS strategy training and Gaussian denoising results after normal training of DnCNN model.}
\end{figure}

The adversarial defense methods that aim to improve the robustness of the model are often accompanied with the problems of reduced model accuracy and increased computational burden due to the generation of adversarial samples during training. By shifting the distribution of the Gaussian noise samples, TS expands the training data space to accommodate the perturbed Gaussian typical set, thereby reducing the computational load.

To qualitatively evaluate the proposed TS sampling and the data augmentation strategies, we present the denoising results of different training strategies on the classical image 'Airplane' from the Set12 dataset with a noise level of 25, as shown in Figure \ref{show1}. In Figure \ref{show1} (a-c), adversarial perturbations generated under $L_2$ and $L_\infty$ constraints are imperceptible to humans. The denoised results produced by DnCNN are shown in Figure \ref{show1} (e-g). It can be observed that the regularly trained model introduces artifacts in the zoomed-in areas of adversarial samples, and the PSNR value decreases significantly. Figure \ref{show1} (h-m) show the denoised results under TS-Pres. and TS-Def. augmentations. While the model's robustness is improved, its performance under Gaussian noise is well preserved with TS-Def. and even slightly improved with TS-Pres. The robustness is further improved with TS-Def., as the PSNRs are significantly increased in Figure \ref{show1} (l-m), and the artifacts are mitigated.

Figure \ref{show2} further demonstrates that TS sampling is better at processing textured images. It can be observed that, compared to the clean image, the regularly trained DnCNN produces over-smoothed results in the grass area. In contrast, TS-Def. yields a similar denoising result but retains more details in the red box. The details in the red box are closer to those in the original image, which are missing from the Gaussian recovery in regular training. These experiments confirm that the TS training strategies enhance the model's robustness while maintaining or even improving its Gaussian denoising performance.

\begin{sidewaystable}
	\caption{PSNR(dB), SSIM, and MAE for different training strategies on the Set12-Gaussian, Set12-Adv-$L_2$ and, Set12-Adv-$L_\infty$, and  datasets, with a noise level of 25.}
	\fontsize{7}{9}\selectfont
	\centering
	\begin{tabular}{ llllllllllllll|l } 
		\hline
		\hline
		Image &        &Cameraman&House&Pepper&Fishstar&Monarch&Airplane&Parrot&Lena&Barbara&Ship&Man&Couple&Avg.(RES)\\
		\hline
		&&\multicolumn{12}{c}{Set12-Gaussian}&\\
        \hline
        normal &&30.05&33.03&30.81&29.42&30.25&29.13&29.44&32.33&29.88&30.14&30.05&30.03&30.387 \\
        TS-Pres.&PSNR&29.95&33.10&30.85&29.46&30.26&29.10&29.43&32.39&29.93&30.14&30.05&30.03&\textbf{30.390} \\
        TS-Def.&&29.84&33.04&30.79&29.44&30.25&29.08&29.33&32.36&29.90&30.08&30.00&30.00&30.343 \\
		\hline
        normal&&0.87&0.86&0.88&0.87&0.92&0.87&0.87&0.87&0.88&0.81&0.82&0.83&\textbf{0.862} \\
        TS-Pres.&SSIM&0.86&0.86&0.88&0.87&0.92&0.87&0.87&0.87&0.88&0.81&0.82&0.83&0.861 \\
        TS-Def.&&0.86&0.85&0.88&0.87&0.92&0.87&0.86&0.87&0.88&0.81&0.82&0.82&0.859 \\
        \hline
        normal &&5.1&4.0&5.2&6.2&5.3&5.8&5.7&4.4&5.8&5.8&5.8&5.9&5.41\\
        TS-Pres.&MAE&5.1&3.9&5.1&6.2&5.3&5.8&5.7&4.4&5.8&5.8&5.7&5.9&\textbf{5.38}\\
        TS-Def.&&5.1&3.9&5.2&6.2&5.3&5.8&5.8&4.4&5.8&5.8&5.7&5.9&5.40\\
        \hline

        &&\multicolumn{12}{c}{Set12-Adv-$L_2$}&\\
        \hline
        normal&&29.62&31.26&30.04&29.04&29.78&28.72&29.13&30.98&29.17&29.36&29.17&29.20&29.620(-0.767) \\
        TS-Pres.&PSNR&29.81&32.31&30.50&29.31&30.19&29.03&29.26&31.88&29.60&29.86&29.70&29.67&30.087(-0.303) \\
        TS-Def.&&29.77&32.64&30.62&29.39&30.30&29.13&29.23&32.19&29.69&30.01&29.89&29.83&\textbf{30.218(-0.125)} \\
        \hline
        normal&&0.85&0.78&0.84&0.86&0.88&0.84&0.85&0.80&0.85&0.78&0.78&0.79&0.824(-0.038) \\
        TS-Pres.&SSIM&0.86&0.83&0.87&0.87&0.90&0.86&0.86&0.84&0.87&0.80&0.81&0.82&0.850(-0.011) \\
        TS-Def.&&0.86&0.85&0.88&0.87&0.91&0.87&0.86&0.86&0.87&0.81&0.82&0.82&\textbf{0.857(-0.002)} \\
        \hline
		normal&&5.7&5.2&5.9&6.6&5.9&6.4&6.2&5.4&6.5&6.5&6.6&6.7&6.14(+0.73)\\
        TS-Pres.&MAE&5.4&4.5&5.5&6.4&5.5&6.0&6.1&4.7&6.1&6.1&6.1&6.2&5.71(+0.33)\\
        TS-Def.&&5.4&4.2&5.4&6.3&5.4&5.9&6.0&4.5&6.0&5.9&5.9&6.1&\textbf{5.58(+0.18)}\\
        \hline
        &&\multicolumn{12}{c}{Set12-Adv-$L_\infty$}&\\
        \hline
        normal&&29.24&31.58&29.84&28.78&29.60&28.65&28.76&30.96&29.13&29.27&29.18&29.15&29.512(-0.875)\\
        TS-Pres.&PSNR&29.67&32.51&30.34&29.18&29.99&28.96&29.07&31.79&29.56&29.80&29.70&29.66&30.019(-0.371)\\
        TS-Def.&&29.75&32.80&30.51&29.26&30.17&29.08&29.14&32.11&29.67&29.98&29.89&29.83&\textbf{30.183(-0.160)}\\
        \hline
        normal&&0.81&0.80&0.83&0.86&0.88&0.83&0.82&0.80&0.84&0.77&0.78&0.79&0.818(-0.044)\\
        TS-Pres.&SSIM&0.85&0.84&0.86&0.87&0.90&0.86&0.85&0.84&0.86&0.80&0.81&0.82&0.847(-0.014)\\
        TS-Def.&&0.86&0.85&0.87&0.87&0.91&0.87&0.86&0.86&0.87&0.81&0.82&0.83&\textbf{0.856(-0.003)}\\
        \hline
        normal&&6.3&5.0&6.1&6.9&6.0&6.4&6.7&5.4&6.5&6.6&6.6&6.7&6.26(+0.79)\\
        TS-Pres.&MAE&5.7&4.3&5.6&6.5&5.6&6.0&6.2&4.8&6.1&6.1&6.1&6.3&5.79(+0.41)\\
        TS-Def.&&5.6&4.1&5.5&6.4&5.4&5.9&6.2&4.5&6.0&6.0&5.9&6.1&\textbf{5.62(+0.22)}\\

        \hline
        \hline
	\end{tabular}
 \label{ta:proformance}
\end{sidewaystable}

Table \ref{ta:proformance} reports the PSNR (dB), SSIM, and MAE results of DnCNN under different training strategies on the Set12, Set12-Adv-\(L_2\), and Set12-Adv-\(L_\infty\) datasets at a noise level of 25. We note that TS-Pres. enhances model robustness while slightly improving performance on Gaussian noise denoising, making it a universally applicable data augmentation module. TS-Def. significantly reduces the performance gap between adversarial samples and Gaussian noise samples, enabling the model to perform almost equally well on both. The experimental results show that TS sampling can greatly improve the model's robustness by simply changing the noise sampling pattern. In particular, TS-Pres. can maximize model robustness while maintaining or even improving its denoising performance. The success of the TS strategies also suggests that adversarial transferability between models is closely related to the artificial Gaussian noise added during the training process. 

Table \ref{ta:gener} further reports the image denoising results of various denoising models under different training strategies on the Set12, Set12-Adv-$L_2$ and Set12-Adv-$L_\infty$ datasets. The compared methods include DnCNN-B \cite{DNCNN}, CTNet \cite{tian2024cross}, and DeamNet \cite{ren2021adaptive}, where DnCNN-B is a CNN-based blind deep image denoising model, CTNet is a non-blind deep image denoising models with Transformer architecture, and DeamNet is an unfolding-based image denoising model. It can be observed that TS-Pres. produces the best performance in Gaussian denoising for all the models. TS-Def., on the other hand, achieves the best performance in denoising adversarial samples, closely matching the performance on Gaussian denoising. Specifically, TS-Pres. maintains the Gaussian denoising performance of the model while enhancing its robustness. TS-Def. reduces the performance gap between adversarial and Gaussian denoising, although the performance on Gaussian denoising slightly decreases.

\begin{table}
	\caption{Average PSNR(dB)/SSIM results of various denoising models under different training strategies at a noise levels of 25 on the Set12, Set12-Adv-$L_2$, and Set12-Adv-$L_\infty$ datasets.}
	\fontsize{7}{9}\selectfont
	\centering
	\begin{tabular}{lcccccc} 
		\hline
		\hline
		& \multicolumn{2}{c}{Gauss} & \multicolumn{2}{c}{Set12-Adv-$L_2$} & \multicolumn{2}{c}{Set12-Adv-$L_\infty$} \\
		Model & PSNR & SSIM & PSNR & SSIM & PSNR & SSIM \\
		\hline
		&&&normal&&& \\
		\hline
		DnCNN-B   & 30.3350 & 0.8615 & 29.5334 & 0.8221 & 29.3925  & 0.8165 \\
		CTNet          & 30.8325 & 0.8122 & 29.8021 & 0.8201  & 29.7133 & 0.8122  \\
		DeamNet          & 30.6875 & 0.8711  & 29.7558 & 0.8190 & 29.6100 & 0.8094 \\
		\hline
		&&&TS-Pres.&&& \\
		\hline
		DnCNN-B    & \textbf{30.5175}  & \textbf{0.8654}  & 29.8373 & 0.8341  & 29.7083 & 0.8265 \\
		CTNet          & \textbf{30.8675}  & \textbf{0.8728} & 30.1708 & 0.8404  & 30.0525 & 0.8404   \\
		DeamNet          & \textbf{30.6950} & \textbf{0.8711} & 30.4867 & 0.8589 & 30.3950 & 0.8545 \\
            \hline
		&&&TS-Def.&&& \\
		\hline
		DnCNN-B    & 30.2425 & 0.8557 & \textbf{30.0225} & \textbf{0.8476} & \textbf{29.9167} & \textbf{0.8444}  \\
		CTNet          & 30.6425 & 0.8667 & \textbf{30.4750} & \textbf{0.8623} & \textbf{30.4283} & \textbf{0.8619}   \\
		DeamNet          & 30.6483 & 0.8693 & \textbf{30.6042} & \textbf{0.8666} & \textbf{30.5675} & \textbf{0.8664}  \\
		\hline
		\hline
	\end{tabular}
	\label{ta:gener}
\end{table}

In summary, TS sampling has demonstrated a significant advantage in improving model robustness without sacrificing performance on the original task, i.e., Gaussian denoising. Moreover, TS sampling exhibits superior performance in maintaining texture in the Gaussian denoising task.

\subsection{Mixed-noise sampling training method}
Incorporating samples from a neighborhood with a slightly higher noise level around the original sampling point is also an effective method for increasing sampling in regions of lower probability density. Therefore, we propose a mixed-noise sampling training method, which aims to combine data at a specified noise level with data at a marginally higher noise level during the training of deep image denoising models. Figure \ref{fenbu2} illustrates the probability density distributions of mixed noise at noise levels 25 and 26, compared to the probability density distribution obtained using the normal deep image denoising training sampling method. Compared to Figure \ref{fenbu}, the sampling range of the mixed-noise sampling training method has been significantly expanded, accompanied by a substantial downward shift in the log-likelihood density.

\begin{figure}[t!]\label{fenbu2}
	\centering
	\includegraphics[width=.8\textwidth]{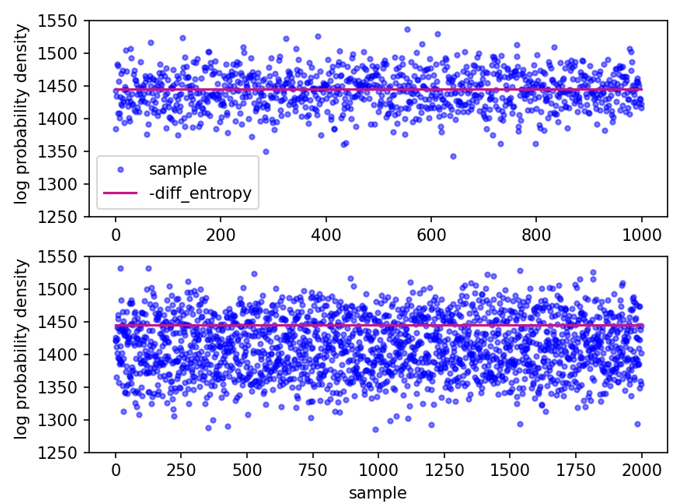}
	\caption{Normal deep image denoising model Gaussian noise sampling schematic (\emph{Top}). Mixed Gaussian noise sampling schematic (\emph{Down}).}
\end{figure}

Table \ref{ta:mixed} presents a comparison of the effects of normal sampling training, TS sampling training, and two types of mixed-noise sampling training: one that combines noise level 25 with noise level 25.5, and another that mixes noise level 25 with noise level 26. In particular, the Gaussian denoising effect of model training for mixed noise levels 25 and 25.5 is adequately preserved, demonstrating a slight improvement in performance; however, the enhancement in model robustness remains insufficient. Conversely, the robustness improvement when training the model with mixed noise levels 25 and 26 is marginally greater; nonetheless, the handling of Gaussian noise tends to deteriorate. It is important to note that TS-Pres. exhibits effects similar to those of the mixed-noise sampling training method and is particularly suitable for scenarios that require stringent preservation of the denoising effectiveness of the model on Gaussian samples. The primary distinction between these two methods is that TS-Pres. converges more rapidly, achieving convergence within 100 epochs, while the mixed-noise sampling training method typically necessitates approximately 200 epochs to reach optimal results. Additionally, TS-Def. allows for a marginal reduction in the Gaussian noise processing effectiveness in exchange for a significant enhancement in robustness, thereby markedly diminishing the impact of adversarial samples.

\begin{table}
	\caption{Average PSNR(dB)/SSIM results of mixed noise and TS sampling training on DnCNN at a noise levels of 25 on the Set12, Set12-Adv-$L_2$, and Set12-Adv-$L_\infty$ datasets.}
	\fontsize{7}{9}\selectfont
	\centering
	\begin{tabular}{lcccccc} 
		\hline
		\hline
		& \multicolumn{2}{c}{Gauss} & \multicolumn{2}{c}{Set12-Adv-$L_2$} & \multicolumn{2}{c}{Set12-Adv-$L_\infty$} \\
		Model & PSNR & SSIM & PSNR & SSIM & PSNR & SSIM \\
		\hline
		Normal   & 30.3871 & 0.8625 & 29.6201 & 0.8238 & 29.5117  & 0.8176 \\
		Mixed w. 25.5          & \textbf{30.4035} & \textbf{0.8625} & 29.9174 & 0.8404  & 29.8235 & 0.8352  \\
		Mixed w. 26          & 30.3839 & 0.8613  & 30.0874 & 0.8499 & 30.0214 & 0.8473 \\
		TS-Pres.          & 30.3900 & 0.8615 & 30.0871 & 0.8504  & 30.0179 & 0.8467  \\
		TS-Def.          & 30.3491 & 0.8589  & \textbf{30.2183} & \textbf{0.8567} & \textbf{30.1881} & \textbf{0.8561} \\
		\hline
		\hline
	\end{tabular}
	\label{ta:mixed}
\end{table}

\section{Discussion and Conclusion}
In this paper, we focus on exploring the reasons why adversarial samples exhibit strong adversarial transferability in the domain of image denoising. Through a series of experiments, we narrowed the reason to the artificial Gaussian noise added to during the training process of the deep models. Additionally, we proposed bounding Gaussian noises and adversarial samples using typical set through the asymptotic equipartition property. By applying the typical sets analysis for image denoising, we proved that adversarial samples deviate slightly from the typical set of the original input distribution, leading to model failures. Inspired by this, we propose a novel adversarial defense method: the Out-of-Distribution Typical Set Sampling (TS) training strategy. TS extends the noise sampling space to the larger typical set of adversarial samples by altering the probability densities of the noise samples. Extensive experimental results demonstrate the effectiveness of the TS strategy for both Gaussian and adversarial denoising. TS can significantly reduce the gap between adversarial denoising and Gaussian denoising, while maintaining or even slightly improving the performance of Gaussian denoising.

There are some insights that were not rigorously proven in this work but are worth mentioning. As demonstrated in our experiments and theories, the model's performance around a sample is related to the direction of the Gaussian noise rather than the image content. This suggests that the deep learning model has effectively learned the distribution of the noise, specifically the Gaussian distribution. Or, for simplicity, deep learning models work as desired. Additionally, while previous works suggest that adversarial attacks are caused by the high dimensionality of the input, their analyses are based on assumptions about the model and the data distribution. This work investigates adversarial attacks in a simpler task: image denoising. In this context, we explicitly identified the topology of the samples and demonstrated how adversarial samples deviate from this topology. Furthermore, we propose that if the distribution of adversarial samples is known, we can sample directly from this distribution to enhance the model robustness instead of using adversarial samples to augment the model.

Future research directions include generalizing the analysis to the other noise distributions, further analysis of the typical set for $L_\infty$-constrained attacks, development of more efficient sampling strategies for the larger typical set, tighter bounds for adversarial samples, application of TS to real-world image denoising, application of TS to plug-and-play inverse problems, and exploration of the safety of deep image denoising methods, among others.

\section*{Acknowledgments}
This work was partially supported by the National Key $R\&D$ Program of China(2023YFC2205900, 2023YFC2205903), the Natural Sciences Foundation of Heilongjiang Province(ZD2022A001), the National Natural Science Foundation of China (12371419, 12171123, U21B2075, 12271130, 12401557), and the Basic Science Center Project of the National Natural Science Foundation of China (62388101).

\bibliographystyle{siamplain}
\bibliography{references}
\end{document}